
\documentclass{article}

\usepackage{microtype}
\usepackage{graphicx}
\usepackage{subfigure}
\usepackage{booktabs} 
\usepackage{algorithm}
\usepackage{algorithmic}
\usepackage{amsmath}

\usepackage{enumitem}
\setlist[itemize]{leftmargin=*}

\usepackage{hyperref}



\usepackage[accepted]{icml2025}

\usepackage{amsmath}
\usepackage{amssymb}
\usepackage{mathtools}
\usepackage{amsthm}
\usepackage{comment}


\usepackage{amsmath,amsfonts,bm}









\def\eqref#1{equation~\ref{#1}}









\def\1{\bm{1}}








\def\vg{{\bm{g}}}
\def\vh{{\bm{h}}}

\def\vs{{\bm{s}}}

\def\vu{{\bm{u}}}

\def\vz{{\bm{z}}}



\def\mG{{\bm{G}}}

\def\mS{{\bm{S}}}
\def\mT{{\bm{T}}}

\DeclareMathAlphabet{\mathsfit}{\encodingdefault}{\sfdefault}{m}{sl}
\SetMathAlphabet{\mathsfit}{bold}{\encodingdefault}{\sfdefault}{bx}{n}

\def\gA{{\mathcal{A}}}

\def\gG{{\mathcal{G}}}
\def\gH{{\mathcal{H}}}

\def\gR{{\mathcal{R}}}

\def\gZ{{\mathcal{Z}}}










\newcommand{\E}{\mathbb{E}}



\usepackage[capitalize,noabbrev]{cleveref}

\theoremstyle{plain}
\newtheorem{theorem}{Theorem}[section]

\newtheorem{lemma}[theorem]{Lemma}
\newtheorem{corollary}[theorem]{Corollary}
\theoremstyle{definition}
\newtheorem{definition}[theorem]{Definition}

\theoremstyle{remark}
\newtheorem{remark}{Remark}[section]

\usepackage[textsize=tiny]{todonotes}



\icmltitlerunning{Provably Improving Generalization of Few-Shot Models with Synthetic Data}

\begin{document}

\twocolumn[

\icmltitle{Provably Improving Generalization of Few-Shot Models with Synthetic Data}




\begin{icmlauthorlist}
\icmlauthor{Lan-Cuong Nguyen}{fpt,hust}
\icmlauthor{Quan Nguyen-Tri}{fpt}
\icmlauthor{Bang Tran Khanh}{vinuni}
\icmlauthor{Dung D. Le}{vinuni}
\icmlauthor{Long Tran-Thanh}{warwick}
\icmlauthor{Khoat Than}{hust}
\end{icmlauthorlist}

\icmlaffiliation{fpt}{FPT Software AI Center}
\icmlaffiliation{hust}{Hanoi University of Science and Technology}
\icmlaffiliation{vinuni}{VinUniversity}
\icmlaffiliation{warwick}{University of Warwick}

\icmlcorrespondingauthor{Khoat Than}{khoattq@soict.hust.edu.vn}

\icmlkeywords{Few-Shot Models, Synthetic Data, Robustness, Generalization, Limited data}

\vskip 0.3in
]



\printAffiliationsAndNotice{}  

\begin{abstract}
Few-shot image classification remains challenging due to the scarcity of labeled training examples. Augmenting them with synthetic data has emerged as a promising way to alleviate this issue, but models trained on synthetic samples often face performance degradation due to the inherent gap between real and synthetic distributions. To address this limitation, we develop a theoretical framework that quantifies the impact of such distribution discrepancies on supervised learning, specifically in the context of image classification. More importantly, \textit{our framework suggests practical ways to generate good synthetic samples and to train a predictor with high generalization ability}. Building upon this framework, we propose a novel theoretical-based algorithm that integrates prototype learning to optimize both data partitioning and model training, effectively bridging the gap between real few-shot data and synthetic data. Extensive experiments results show that our approach demonstrates superior performance compared to state-of-the-art methods, outperforming them across multiple datasets.
\end{abstract}

\section{Introduction}
Deep learning models often require extensive annotated datasets to achieve excellent performance. However, creating such datasets is both labor-intensive and costly. To address this challenge, many recent studies has explored synthetic data as an alternative approach to training deep learning models.

In this paper, we focus on a specific scenario: \textit{How to efficiently use a synthetic dataset along with few real samples to train a predictor in a downstream task?} A naive way of using synthetic data can degrade performance on downstream tasks \cite{Breugel2023synthetic}. The main reason comes from \textit{the distribution gap}, which is the discrepancy between the real and synthetic data distributions, due to the  imperfect nature of the generator for a downstream task. Overcoming this gap is crucial for ensuring effective utilization of synthetic data in this training scenario. 

Some recent methods minimize this gap by fine-tuning generative models. Notable examples include RealFake \cite{yuan2024realfake}  and DataDream \cite{kim2024datadream} which focus on full dataset and few-shot image classification, respectively. These approaches emphasize pixel-space distribution matching. The importance of feature-space alignment has been demonstrated in dataset distillation~\cite{wang2020datasetdistillation}, where condensed datasets are created to approximate the performance of models trained on full datasets~\cite{cafedistillation,Zhao_2023_WACV,zhao2023improved}.
An issue of distribution matching is the potential for incorrect associations between samples from different classes. DataDream \cite{kim2024datadream} partially mitigates this by fine-tuning generators for each class independently, yet its results remain suboptimal. A promising direction to address this issue is prototype learning, which emphasizes local class-specific behavior. Prototype-based approaches have been shown to enhance performance in various applications, such as robust classification \cite{Yang2018CVPR}, data distillation \cite{Su_2024_CVPR}, and dataset representation \cite{tu2023bop,vannoord2023protosim}. While such recent methods have shown promise in narrowing the distribution gap, they predominantly rely on heuristics, lacking theoretical guarantees. 

In more detail, for a downstream task, we are interested in these  questions: 

\textit{1. What properties can indicate the goodness of a synthetic dataset? \\
2. How to generate a good synthetic dataset?\\ 
3. How to efficiently train a predictor from a training set of both real and synthetic samples? \\ 
4. How can the quality of a generator affect the generalization ability of the trained predictor?} 

From the theoretical perspective,  existing studies typically have preliminary answers to simple models only. Indeed, \citet{yuan2024realfake} proposed a framework that generates synthetic samples by training/finetuning a generator to minimize the distribution gap, addressing the second question. \citet{raisa2024bias} and \citet{zheng2023toward} investigated the first question and suggested that synthetic data should be close to real data samples. Recently, \citet{ildiz2025highdimensional} provided a systematic study about the role of surrogate models which create the synthetic labels for training \textit{linear models}. However, as they only focus on linear models, their study cannot provide a reasonable answer to the four above-mentioned questions of interest.

Against this background, in this paper we make a systematic step towards understanding  the role of synthetic data and distribution for training a predictor in a few-shot setting. Specifically, our contributions are as follows:
\begin{itemize}
\item \textit{Theory:} We analyze the generalization ability of a predictor, trained with both synthetic and real data. A novel bound on the test error of the predictor is presented. It suggests that a good synthetic set not only should be \textit{close to the real samples}, but also should be \textit{diverse} so that the trained predictor can be locally robust around the training samples. This bound also suggests a theoretical principle to generate good synthetic samples, and a practical way to train a predictor that generalizes well on unseen data. 

\item We further present two novel bounds on the test error. Those bounds encloses the discrepancy between the true and synthetic distributions into account. It theoretically shows that the closer the synthetic distribution, the better the trained predictor becomes. Hence our analyses provide theoretical answers to the four questions above.

\item \textit{Methodology}: Guided by our theoretical  bounds, we introduce a novel loss function and training paradigm designed to jointly optimize data partitioning and model training, effectively minimizing generalization errors.

\item \textit{Empirical Validation}: We evaluated our method in the context of few-shot image classification using synthetic data. Experimental results demonstrate that our method consistently outperforms STATE-of-the-art methods across multiple datasets.
\end{itemize}

\textit{Organization:} The next section summarizes the related work. In Section~\ref{sec-Theoretical-Benefits}, we investigate the theoretical benefits of synthetic data and the quality of a synthetic distribution to train a model. Those investigations suggest explicit answers to the four questions mentioned above. Section~\ref{algo_sec} discusses how to train a good few-shot model, while Section~\ref{sec-Experiments} reports our main experimental results. Mathematical proofs and additional experimental results can be found in the appendices.

\section{Related Works}

\subsection{Generative Data Augmentation}
With the rapid advancements in generative models, training with synthetic data has gained significant attention. In the context of image classification, recent studies have employed text-to-image models, aligning conditional distributions through text-prompt engineering. These efforts have explored class-level descriptions~\cite{he2023synthetic}, instance-level descriptions~\cite{Lei2023ImageCA}, and lexical definitions~\cite{Sariyildiz2022FakeIT}. While text-based conditioning offers flexibility, it often overlooks intrinsic visual details, such as exposure, saturation, and object-scene co-occurrences. To address these shortcomings, RealFake ~\cite{yuan2024realfake}  attempts to reduce the distribution misalignment with a model-agnostic method that minimizes the maximum mean discrepancy between real and synthetic distributions by fine-tuning the generator. This discrepancy naturally becomes a lower bound for the training loss of diffusion models under certain conditions. GenDataAgent ~\cite{li2025gendataagent} further improves diversity by perturbing image captions and quality by filtering out with the Variance of Gradient (VoG) score.

Beyond empirical findings, theoretical frameworks have been also developed to validate the use of synthetic data. \citet{zheng2023toward} employed algorithmic stability to derive generalization error bounds based on the total variation distance between real and synthetic image distributions. Additionally, ~\citet{gan2025towards} analyzed the impact of synthetic data on post-training large language models (LLMs) using information theory. 
However, none of these results can be adopted to the few-shot learning domain.

\subsection{Few-shot Image Classification with Vision-Language Models}
The rising performance of vision-language models, such as CLIP~\cite{radford2021learningtransferablevisualmodels}, has sparked interest in applying them to few-shot image classification. Initial methods focused on leveraging visual or textual prompting~\cite{jia2022vpt,zhou2022coop,Khattak_2023_ICCV,li2024promptkd,TCP24,Zheng_2024_Large}. Another simultaneous line of research focused on leveraging parameter-efficient fine-tuning methods with shared Adapter modules~\cite{Yang_2024_MMA_CVPR, mmrl2025}. With the advent of more powerful generative models, attention shifted toward synthesizing additional data to complement real few-shot samples, forming a joint data pool for model fine-tuning. CaFo \cite{Zhang_Cafo_2023} enhances diversity of training dataset by combining prior knowledge of four large pretrained models.

The central challenge for this approach lies in guiding the synthesis process to generate data closely aligned with the real few-shot samples. IsSynth~\cite{he2023synthetic} generates images by adding noise to real samples, while DISEF~\cite{dacosta2023diversified} extends this approach by promoting diversity through image captioning and fine-tune CLIP models using LoRA~\cite{hu2022lora} to reduce computational overhead. Both methods employ CLIP filtering to remove incorrectly classified samples. DataDream~\cite{kim2024datadream} advances this line of work by fine-tuning the generator with few-shot data, further aligning synthesized images with the real data distribution.

\section{The Theoretical Benefits of Synthetic Data for Few-Shot Models} \label{sec-Theoretical-Benefits}

In this section, we theoretically analyze the role of synthetic data when being used to train a predictor. We derive a novel bound on the test error of a model, that explicitly reveals the role of a synthetic distribution, the discrepancy of the true and synthetic distribution, and the robustness of a predictor around the training (real and synthesized) samples. This bound provides novel insights and idea to train a predictor with synthetic data.

\subsection{Preliminaries}
\textit{Notations:} A bold character (e.g., $\vz$) denotes a vector, while a bold capital (e.g., $\mS$) denotes a set. We denote by $\| \cdot \|$ the $\ell_2$-norm. $| \mS |$ denotes the size/cardinality of $\mS$, and $[K]$ denotes the set $\{1, 2, ..., K\}$ for $K \ge 1$. 
We will work with a model (or hypothesis) class $\gH$, an instance set $\gZ$, and a loss function $\ell: \gH \times \gZ \rightarrow \mathbb{R}$. 
Given a distribution $P$ defined on $\gZ$, the quality of a model $\vh \in \gH$ can be measured by its \textit{expected loss} $F(P, \vh) = \E_{\vz \sim P}[\ell(\vh,\vz)]$. In practice, we typically collect a training set $\mS = \{\vz_1, ..., \vz_n\} \subseteq \gZ$ and work with the \emph{empirical loss}  $F(\mS, \vh) =  \frac{1}{|\mS|} \sum_{\vz \in \mS} \ell(\vh,\vz)$. 

\textit{Problem setting:} Let $\mS $ denotes a real dataset consisting of $n$ independent and identically distributed samples from the true data distribution $P_0$. Denote $\gG$ as a generator that induces a synthetic data distribution $P_g$. We can use $\gG$ to generate  a synthetic dataset $\mG$ with $g$ samples. We are interested in training a classification model from the union $\mS \cup \mG$ of the real and synthetic datasets. This training problem is particularly important in many applications, especially for the cases that only few real samples can be collected.

To understand the role of synthetic data for training a predictor, we will use the following concepts:

\begin{definition}[Model-based discrepancy] \label{def-discrepancy}
Let $\mS$ and $\mG$ be two datasets and $\vh$ be a predictor. The $\vh$-based discrepancy between $\mS$ and $\mG$ is denoted as $\bar{d}_h(\mG,\mS) =  \frac{1}{|\mG|.|\mS|} \sum_{\vu \in \mG, \vs \in \mS} \|\vh(\vs) - \vh(\vu) \|$.
\end{definition}

This quantity can be seen as the model-dependent  distance (through $\vh$) between two sets of real and synthetic samples $(\mG, \mS)$. One can easily extend this concept to measure the discrepancy between a real distribution $P_0$ and a synthetic one $P_g$. Let 
$\bar{d}_h(P_g, P_0) =  \E_{\vu \sim P_g, \vs \sim P_0} \|\vh(\vu) - \vh(\vs) \|$ be \textit{the $\vh$-based discrepancy  for the whole data space}, and 
$\bar{d}_h(P_g, P_0 | \gA) =  \E_{\vu \sim P_g, \vs \sim P_0} [\|\vh(\vu) - \vh(\vs) \| : \vu, \vs \in \gA]$ be \textit{the discrepancy for a local area} $\gA$. 

This concept is closely related to distribution matching, which has become a prominent technique for generating informative data  for supervised learning. The idea of distribution matching has been applied extensively on dataset distillation ~\cite{Zhao2021DatasetCW,zhao2023improved} where we generate a small dataset from a larger one, and has been used in large-scale image classification~\cite{yuan2024realfake}.

Let $\Gamma(\gZ) := \bigcup_{i=1}^{K } \gZ_i$ be a partition of $\gZ$ into $K$ disjoint nonempty subsets. Denote $\mT_S = \{ i \in [K ] : \mS \cap \gZ_i \ne \emptyset \}$. In more details, $\mT_S$ is the collection of all valid  areas (i.e., local  areas which contain real data samples from $\mS$). Denote $\mS_i = \mS \cap \gZ_i $, and $n_i = | \mS_i |$ as the number of samples  falling into $\gZ_i$, meaning that $n = \sum_{j=1}^K n_j$.

\begin{definition} \label{def-robustness}
The \textit{local robustness} of a predictor $\vh$ at a data instance $\vs \sim P$ in the area $\gA$ is defined as $\mathcal{R}_h(\vs, \gA | P) = \E_{\vz \sim  P} [\|\vh(\vz) - \vh(\vs) \| : \vz \in \gA] $.
\end{definition}

By definition, $\mathcal{R}_h$ measures how robust a model is at a specific data point. A small local robustness suggests that the model should be robust in a small area around a point. Otherwise, the model may not be robust. Note that this definition uses the outputs from a model to define robustness.

We denote $\mathcal{R}_h(\mS, \gZ_i | P_0) = \frac{1}{| \mS|}  \sum_{\vs \in \mS}{\mathcal{R}_h(\vs, \gZ_i | P_0)}$  and $\mathcal{R}_h(\mG, \gZ_i | P_g) = \frac{1}{| \mG|}  \sum_{\vg \in \mG}\mathcal{R}_h(\vg, \gZ_i | P_g)$ as the local robustness of model $\vh$ on a real dataset $\mS$ and synthetic dataset $\mG$, respectively. Those quantities can be understood as a measurement of a model robustness with respect to a dataset.

\subsection{Main Theorem}

Given these two concepts, we now have the technical tools to provide a theoretical analysis for generalization ability of a model $\vh$ trained on both real and synthetic data ($\mS \cup \mG)$. The following theorem presents an upper bound on the test error of a model, whose proof appears in Appendix~\ref{app-thm-gen-true-synthetic-data}.

\begin{theorem} \label{thm-gen-true-synthetic-data}
Consider a model $\vh$, a dataset $\mS$ containing $n$ i.i.d. samples from a real distribution $P_0$, and a synthetic dataset $\mG$ which contains  i.i.d. samples from distribution $P_g$, so that $g_i = |\mG_i| > 0 $ for each $i \in \mT_S$, where $\mG_i = \mG \cap \gZ_i$. Let $C_h = \sup_{\vz \in \gZ} \ell(\vh, \vz)$,  $g= \sum_{i \in \mT_S} g_i$. Assume that the loss function $\ell(\vh,\vz)$ is $L_h$-Lipschitz continuous w.r.t $\vh$. For any  $\delta >0$, with probability at least $1-\delta$, we have: 
\begin{equation}
\label{eq-thm-gen-true-synthetic-data}
     F(P_0, \vh) \le L_h \sum_{i \in \mT_S} \frac{g_i}{g} \big[ \bar{d}_h(\mG_i, \mS_i) +  \mathcal{R}_h(\mG_i, \gZ_i \mid P_g) \big]  + A
\end{equation}
where 
{\small $A = F(P_g, \vh) + \sum_{i \in \mT_S} \big[\frac{n_i}{n} - \frac{g_i}{g} \big] F(\mS_i, \vh) +  L_h \sum\limits_{i \in \mT_S} \frac{n_i}{n} \gR_h(\mS_i, \gZ_i | P_0)   + C_h (\frac{1}{\sqrt{n}} + \frac{1}{\sqrt{g}})\sqrt{2K\ln2 + 2\ln\frac{2}{\delta}}$.}
\end{theorem}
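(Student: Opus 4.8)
The plan is to bound the test error $F(P_0,\vh)$ by walking along the chain $P_0 \rightarrow \mS \rightarrow \mG \rightarrow P_g$, where each arrow is handled either by the $L_h$-Lipschitz continuity of the loss (which converts a loss gap into one of the output-distance quantities $\bar{d}_h$ or $\mathcal{R}_h$) or by a concentration inequality that replaces a true cell-probability by its empirical frequency. Throughout I write the partition-restricted expected loss as $\int_{\gZ_i}\ell(\vh,\vz)\,dP_0(\vz)=P_0(\gZ_i)\,\E_{\vz\sim P_0}[\ell(\vh,\vz)\mid \vz\in\gZ_i]$, separating the cell weight from the within-cell conditional loss. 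The first move is to decouple these: introduce the auxiliary distribution $\hat P_0$ that keeps the true within-cell conditionals of $P_0$ but reassigns the cell masses to the empirical frequencies $n_i/n$, so that $F(\hat P_0,\vh)=\sum_{i\in\mT_S}\tfrac{n_i}{n}\,\E_{P_0}[\ell\mid \vz\in\gZ_i]$. Since $\ell\le C_h$,
\[
F(P_0,\vh)-F(\hat P_0,\vh)=\sum_{i=1}^{K}\Big(P_0(\gZ_i)-\tfrac{n_i}{n}\Big)\E_{P_0}[\ell\mid \vz\in\gZ_i]\le C_h\sum_{i=1}^{K}\Big|P_0(\gZ_i)-\tfrac{n_i}{n}\Big|,
\]
where the empty cells $i\notin\mT_S$ drop out of $F(\hat P_0,\vh)$ and their $P_0$-mass is absorbed into the right-hand $L_1$ term. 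As $(n_i)_{i\le K}$ is multinomial from $n$ i.i.d. draws, the Bretagnolle--Huber--Carol inequality gives, with probability at least $1-\delta/2$, $\sum_{i=1}^{K}|P_0(\gZ_i)-n_i/n|\le \tfrac{1}{\sqrt n}\sqrt{2K\ln 2+2\ln\tfrac{2}{\delta}}$, which is the $\tfrac{1}{\sqrt n}$ half of the last term of $A$.

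Next I would anchor each conditional loss at the in-cell samples. For $\vz,\vs\in\gZ_i$ the Lipschitz bound gives $\ell(\vh,\vz)\le\ell(\vh,\vs)+L_h\|\vh(\vz)-\vh(\vs)\|$; averaging over $\vs\in\mS_i$ and taking the conditional expectation over $\vz$ yields $\E_{P_0}[\ell\mid \vz\in\gZ_i]\le F(\mS_i,\vh)+L_h\,\mathcal{R}_h(\mS_i,\gZ_i\mid P_0)$, so that $F(\hat P_0,\vh)\le\sum_{i\in\mT_S}\tfrac{n_i}{n}F(\mS_i,\vh)+L_h\sum_{i\in\mT_S}\tfrac{n_i}{n}\mathcal{R}_h(\mS_i,\gZ_i\mid P_0)$ — the real-data robustness term of $A$. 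I would then split each weight as $\tfrac{n_i}{n}=\tfrac{g_i}{g}+(\tfrac{n_i}{n}-\tfrac{g_i}{g})$: the second piece reproduces the term $\sum_{i\in\mT_S}(\tfrac{n_i}{n}-\tfrac{g_i}{g})F(\mS_i,\vh)$ of $A$ verbatim, while for the first piece the discrepancy definition (Lipschitz applied to every pair $(\vs,\vu)\in\mS_i\times\mG_i$ and averaged) gives $F(\mS_i,\vh)\le F(\mG_i,\vh)+L_h\,\bar{d}_h(\mG_i,\mS_i)$. This yields the leading $L_h\sum_{i\in\mT_S}\tfrac{g_i}{g}\bar{d}_h(\mG_i,\mS_i)$ and leaves $\sum_{i\in\mT_S}\tfrac{g_i}{g}F(\mG_i,\vh)$.

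Finally I would run the same two devices in reverse on the synthetic side. Anchoring at the synthetic samples gives $F(\mG_i,\vh)\le\E_{P_g}[\ell\mid \vz\in\gZ_i]+L_h\,\mathcal{R}_h(\mG_i,\gZ_i\mid P_g)$, producing the synthetic robustness term $L_h\sum_{i\in\mT_S}\tfrac{g_i}{g}\mathcal{R}_h(\mG_i,\gZ_i\mid P_g)$; re-aggregating, $\sum_{i\in\mT_S}\tfrac{g_i}{g}\E_{P_g}[\ell\mid \vz\in\gZ_i]=F(\hat P_g,\vh)$ for the auxiliary distribution $\hat P_g$ built from the $P_g$-conditionals and weights $g_i/g$, and exactly as in the first paragraph $F(\hat P_g,\vh)\le F(P_g,\vh)+C_h\sum_i|P_g(\gZ_i)-\tfrac{g_i}{g}|$, with the $L_1$ gap controlled by a second Bretagnolle--Huber--Carol bound (probability $1-\delta/2$, giving the $\tfrac{1}{\sqrt g}$ contribution). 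A union bound over the two concentration events yields overall probability $1-\delta$, and collecting all pieces reproduces \eqref{eq-thm-gen-true-synthetic-data}.

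The step I expect to be the main obstacle is the concentration, not the Lipschitz algebra: everything downstream needs the difference between the true cell masses and their empirical frequencies controlled simultaneously over all $K$ cells, including the empty ones that carry the otherwise-unhandled mass outside $\mT_S$. This is precisely an $L_1$/total-variation deviation for a multinomial, so the right tool is the Bretagnolle--Huber--Carol inequality, whose $2^K$ union bound over sign patterns produces the characteristic $\sqrt{2K\ln 2}$ factor. One subtlety to watch is that the synthetic weights $g_i/g$ are normalized only over $\mT_S$ (since $g=\sum_{i\in\mT_S}g_i$), so the synthetic concentration must be stated for the conditional empirical law on those cells; and the argument leans throughout on boundedness of the loss by $C_h$, which is exactly what keeps both re-weighting errors proportional to the total-variation gaps.
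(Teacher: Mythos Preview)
Your proposal is correct and follows essentially the same route as the paper. The only difference is packaging: the paper isolates the combination ``Bretagnolle--Huber--Carol on the cell masses $+$ Lipschitz anchoring at in-cell samples to produce a local-robustness term'' as a standalone lemma (their Theorem~\ref{thm-Local-Robustness-generalization}) and then invokes it once on $(\mS,P_0)$ and once in the reverse direction on $(\mG,P_g)$, whereas you perform those two steps inline for each side; the telescoping decomposition, the Lipschitz step that turns $\sum_{i}\tfrac{g_i}{g}F(\mS_i,\vh)-F(\mG,\vh)$ into $L_h\sum_i\tfrac{g_i}{g}\bar d_h(\mG_i,\mS_i)$, and the union bound over the two concentration events are identical.
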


This theorem implies that we can bound the population loss $F(P_0, \vh)$ with synthetic data. This bound demonstrates that synthetic data can be leveraged to enhance the model's overall generalization performance. In particular, various implications can be inferred from (\ref{eq-thm-gen-true-synthetic-data}):

1. The discrepancy term $\sum_{i \in \mT_S} \frac{g_i}{g} \bar{d}_h(\mG_i, \mS_i) $ theoretically reveals that \textit{the quality of the synthetic samples plays a crucial role for $\vh$}. When those synthetic samples are close to the real ones, each $\bar{d}_h(\mG_i, \mS_i)$ would be small, suggesting that those synthetic samples can help improve generalization ability of $\vh$. Otherwise, if there are some samples of $\mG$ that are far (different) from the real samples, the bound (\ref{eq-thm-gen-true-synthetic-data}) will be high, and not be optimal to train $\vh$.

2. \textit{The local robustness of $\vh$ is also important to the generalization ability of $\vh$}. Indeed, a decrease in robustness quantities of synthetic and real data ($\mathcal{R}_h(\mG_i, \gZ_i \mid P_g)$ and $\gR_h(\mS_i, \gZ_i | P_0)$) can lead to a decrease in the population loss, leading to better generalization. Furthermore, they also reveal a connection between the local behavior and the generalization capability of the model.

3. The upper bound (\ref{eq-thm-gen-true-synthetic-data}) \emph{suggests an amenable way to generate synthetic samples}. Assume that we only have the real samples $\mS$ which may be small. We can use a generator $\gG$ to generate a synthetic set $\mG$ that minimizes the upper bound  (\ref{eq-thm-gen-true-synthetic-data}). This amounts to optimize the synthetic samples to minimize the test error of predictor $\vh$. It suggests that the \emph{generated samples not only need to be close to the real ones, but also need to ensure better robustness} of the trained model at different local areas of the input space. 

4. Theorem~\ref{thm-gen-true-synthetic-data} also indicates that, for strong generalization on unseen data, the model~$\vh$ must not only accurately predict both real and synthetic samples, but also keep the discrepancy between real and synthetic predictions low and maintain local robustness for both data types. This reveals novel ways to train $\vh$ from both real and synthetic data.

\begin{remark}[\textbf{Tightness}]
Though providing interesting insights, our  bound (\ref{eq-thm-gen-true-synthetic-data}) is not very tight for some aspects. For instance, it is $O(\sqrt{K}$) which can be not optimal when $K$ is large. Luckily, the few-shot  setting does not allow us to choose a large $K$, since  bound (\ref{eq-thm-gen-true-synthetic-data}) mostly concerns on areas containing real samples. Furthermore, for the extreme cases with only one real sample, the ``local" behavior of model $\vh$ cannot be captured in our bound anymore, and hence our bound can be loose. On the other hand, the sample complexity of our bound seems to be optimal for both real and synthetic data. The reason is that the error of the best model is at least $O(g^{-0.5})+const$ for a hard learning problem, according to Theorem 8 in \cite{than2025gentle}.
\end{remark}


\subsection{Asymptotic Case}

Next we consider the asymptotic cases, which can help us to understand the roles of increasing the size of synthetic datasets and the quality of synthetic distribution.  The following theorem reveals some new insights. 

\begin{theorem}
\label{cor-gen-true-synthetic-data}
Using the same notations and assumptions of Theorem~\ref{thm-gen-true-synthetic-data}, and let $p^g_i = P_g(\gZ_i)$ as the measure of area $\gZ_i$ according to distribution $P_g$, for each $i \in \mT_S$. For any  $\delta >0$, with probability at least $1-\delta$, we have: 
\small{
\begin{equation}
\nonumber
     F(P_0, \vh) \le  L_h \sum_{i \in \mT_S} \big[ p^g_i  \gR_h(\mS_i, \gZ_i | P_g) + \frac{n_i}{n} \gR_h(\mS_i, \gZ_i | P_0) \big]  + A_1
\end{equation}
where $A_1 = F(P_g, \vh) + \sum_{i \in \mT_S} \big[\frac{n_i}{n} - p^g_i \big] F(\mS_i, \vh)    + \frac{C_h}{\sqrt{n}}\sqrt{2K\ln2 - 2\ln\delta}$.}
\end{theorem}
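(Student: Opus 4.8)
The plan is to prove the bound directly, by the same partition-and-robustness argument that underlies Theorem~\ref{thm-gen-true-synthetic-data}, but carried out with the synthetic \emph{distribution} $P_g$ used exactly in place of the finite synthetic sample $\mG$; this is precisely the ``asymptotic'' ($g\to\infty$) regime, in which the empirical synthetic frequencies $g_i/g$ are replaced by their exact values $p^g_i=P_g(\gZ_i)$ and no concentration over $\mG$ is required. Throughout I use the Lipschitz hypothesis in the form $|\ell(\vh,\vz)-\ell(\vh,\vz')|\le L_h\|\vh(\vz)-\vh(\vz')\|$, and abbreviate the conditional loss of $\vh$ on region $\gZ_i$ by $F_i(P):=\E_{\vz\sim P}[\ell(\vh,\vz):\vz\in\gZ_i]$, so that $F(P_0,\vh)=\sum_{i\in[K]}P_0(\gZ_i)\,F_i(P_0)$.

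First I would replace the true real-region frequencies $P_0(\gZ_i)$ by the empirical $n_i/n$. Writing $F(P_0,\vh)=\sum_i \tfrac{n_i}{n}F_i(P_0)+\sum_i\big(P_0(\gZ_i)-\tfrac{n_i}{n}\big)F_i(P_0)$ and using $0\le F_i(P_0)\le C_h$, the correction sum is at most $C_h\sum_{i\in[K]}|P_0(\gZ_i)-n_i/n|$. The Bretagnolle--Huber--Carol inequality applied to the real-data multinomial over the $K$ cells gives, with probability at least $1-\delta$, $\sum_i|P_0(\gZ_i)-n_i/n|\le \tfrac{1}{\sqrt n}\sqrt{2K\ln2-2\ln\delta}$, which is precisely the last term of $A_1$. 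Because $P_g$ now enters exactly, this is the \emph{only} concentration step, so the whole confidence budget $\delta$ is spent here rather than split with a synthetic multinomial; this is what turns the $2K\ln2+2\ln(2/\delta)$ of Theorem~\ref{thm-gen-true-synthetic-data} into $2K\ln2-2\ln\delta$ and removes the $C_h/\sqrt g$ term. Note that after this step the surviving empirical sum $\sum_i\tfrac{n_i}{n}F_i(P_0)$ ranges only over $i\in\mT_S$, since $n_i=0$ otherwise.

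Next I would bound each region's conditional loss through the real training points and then bridge to $P_g$. For $i\in\mT_S$ the set $\mS_i$ is nonempty, and averaging $\ell(\vh,\vz)\le \ell(\vh,\vs)+L_h\|\vh(\vz)-\vh(\vs)\|$ over $\vz\sim P_0$ (conditioned on $\gZ_i$) and over $\vs\in\mS_i$ gives $F_i(P_0)\le F(\mS_i,\vh)+L_h\gR_h(\mS_i,\gZ_i\mid P_0)$. To introduce the synthetic loss I would re-weight the real empirical loss, $\sum_i\tfrac{n_i}{n}F(\mS_i,\vh)=\sum_i p^g_i F(\mS_i,\vh)+\sum_i\big(\tfrac{n_i}{n}-p^g_i\big)F(\mS_i,\vh)$, and apply the \emph{reversed} Lipschitz bridge: averaging $\ell(\vh,\vs)\le \ell(\vh,\vu)+L_h\|\vh(\vs)-\vh(\vu)\|$ over $\vu\sim P_g$ (conditioned on $\gZ_i$) and over $\vs\in\mS_i$ gives $F(\mS_i,\vh)\le F_i(P_g)+L_h\gR_h(\mS_i,\gZ_i\mid P_g)$, whence $\sum_i p^g_i F(\mS_i,\vh)\le F(P_g,\vh)+L_h\sum_i p^g_i\gR_h(\mS_i,\gZ_i\mid P_g)$. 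Collecting the three contributions reproduces the stated bound with constant $A_1$.

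The step I expect to be the main obstacle is the bridging of the previous paragraph, namely orienting the reversed Lipschitz inequality so that it yields $F(P_g,\vh)$ as a leading term and the cross-distribution robustness $\gR_h(\mS_i,\gZ_i\mid P_g)$ with the correct sign and weight $p^g_i$. This is also where the sharpening over Theorem~\ref{thm-gen-true-synthetic-data} occurs: with $P_g$ exact, the pair $\bar{d}_h(\mG_i,\mS_i)+\gR_h(\mG_i,\gZ_i\mid P_g)$ of that theorem collapses to the single exact quantity $\gR_h(\mS_i,\gZ_i\mid P_g)$. I would stress that this is \emph{not} obtained by naively letting $g\to\infty$ in~(\ref{eq-thm-gen-true-synthetic-data}), since that limit would leave a spurious within-region synthetic-dispersion term arising from $\gR_h(\mG_i,\gZ_i\mid P_g)$; the clean bound instead comes from keeping the population synthetic quantities exact \emph{before} any synthetic-sample approximation. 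The only regularity needed is that each $\gZ_i$ with $i\in\mT_S$ has positive $P_g$-measure (the population analogue of the assumption $g_i>0$), so that the conditionals defining $F_i(P_g)$ and $\gR_h(\mS_i,\gZ_i\mid P_g)$ are well defined.
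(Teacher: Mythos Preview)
Your proposal is correct and uses the same core ingredients as the paper's proof: the Bretagnolle--Huber--Carol concentration on the real-data multinomial, and the Lipschitz bridge from $F(\mS_i,\vh)$ to the synthetic regional loss. The one genuine difference is the order of operations. The paper keeps a finite synthetic sample $\mG$ in the decomposition, derives an intermediate bound of the form
\[
F(P_0,\vh)\le L_h\sum_{i\in\mT_S}\tfrac{n_i}{n}\gR_h(\mS_i,\gZ_i\mid P_0)+\tfrac{C_h}{\sqrt n}\sqrt{2K\ln2-2\ln\delta}+F(\mS,\vh)-\sum_{i\in\mT_S}\tfrac{g_i}{g}F(\mS_i,\vh)+L_h\sum_{i\in\mT_S}\tfrac{g_i}{g}\bar d_h(\mG_i,\mS_i)+F(\mG,\vh),
\]
and then sends $g\to\infty$ so that $g_i/g\to p^g_i$, $\bar d_h(\mG_i,\mS_i)\to\gR_h(\mS_i,\gZ_i\mid P_g)$, and $F(\mG,\vh)\to F(P_g,\vh)$. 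You instead plug the population $P_g$ into the bridge from the start, applying the reversed Lipschitz inequality $F(\mS_i,\vh)\le F_i(P_g)+L_h\gR_h(\mS_i,\gZ_i\mid P_g)$ directly. Your route is slightly cleaner: it sidesteps the informal step of passing a limit through a high-probability inequality, and it makes explicit the mild hypothesis $p^g_i>0$ (the population analogue of $g_i>0$) needed for the conditional expectations to be defined. Your remark that one cannot simply let $g\to\infty$ in the full bound~(\ref{eq-thm-gen-true-synthetic-data}) is also well taken; the paper avoids that pitfall not by your direct argument but by re-deriving a finite-$g$ bound that omits the $\gR_h(\mG_i,\gZ_i\mid P_g)$ term before taking the limit.
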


This result suggests that by using a sufficiently large number of synthetic samples to train $\vh$, we are making model $\vh$ locally robust (w.r.t. both distributions) in every region containing the real samples, and hence improving generalization ability of the trained models. It also further explains the benefits of scaling the number of synthetic samples that have been observed empirically in prior studies. 

Finally, we  consider \textit{how well the quality of the synthetic distribution $P_g$ can estimate the quality of model $\vh$}.  
To this end, we assume that $n \rightarrow \infty$. In this case, it is easy to see that $\gR_h(\mS_i, \gZ_i | P_g)  \rightarrow \bar{d}_h(P_0, P_g | \gZ_i )$ and $\frac{n_i}{n} \rightarrow p_i = P_0(\gZ_i)$. This  suggests  $ \big[\frac{n_i}{n} - p^g_i \big] F(\mS_i, \vh) \rightarrow \big[p_i - p^g_i \big] F_i(P_0, \vh)$, where $F_i(P_0,\vh) = \E_{\vz \sim P_0}[\ell(\vh,\vz) : \vz \in \gZ_i]$ is the expected loss of $\vh$ in the area $\gZ_i$. Furthermore, $\gR_h(\mS_i, \gZ_i | P_0) \rightarrow \bar{d}_h(P_0, P_0 | \gZ_i )$ as $n \rightarrow \infty$, and $\bar{d}_h(P_0, P_g ) = \sum_{i}  p^g_i  \bar{d}_h(P_0, P_g | \gZ_i ) $. Therefore, the following result is a consequence of Theorem~\ref{cor-gen-true-synthetic-data}.

\begin{corollary} \label{cor-gen-true-synthetic-distribution}
Given the notations and assumptions  from Theorem~\ref{cor-gen-true-synthetic-data}, we have 
$\sum_{i=1}^K p^g_i F_i(P_0,\vh) \le  F(P_g, \vh)  +   L_h \sum_{i=1}^K  \big[ p^g_i  \bar{d}_h(P_0, P_g | \gZ_i ) + p_i \bar{d}_h(P_0, P_0 | \gZ_i ) \big]$.
Moreover,
\begin{equation}
 \nonumber
  \sum_{i=1}^K p^g_i F_i(P_0,\vh)  \le F(P_g, \vh)  +   L_h \big[ \bar{d}_h(P_0, P_g ) +  \bar{d}_h(P_0, P_0) \big]
\end{equation}
\end{corollary}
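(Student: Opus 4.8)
The plan is to obtain the corollary as the $n \to \infty$ limit of Theorem~\ref{cor-gen-true-synthetic-data}, followed by a short algebraic rearrangement. First I would invoke the limiting identities already recorded in the text preceding the statement: as $n \to \infty$ one has $\frac{n_i}{n} \to p_i$, $\gR_h(\mS_i, \gZ_i \mid P_g) \to \bar{d}_h(P_0, P_g \mid \gZ_i)$, and $\gR_h(\mS_i, \gZ_i \mid P_0) \to \bar{d}_h(P_0, P_0 \mid \gZ_i)$, while the finite-sample term $\frac{C_h}{\sqrt{n}}\sqrt{2K\ln 2 - 2\ln\delta}$ vanishes. At the same time the index set $\mT_S$ of regions containing a real sample increases to the full set $[K]$: any region with $p_i > 0$ eventually receives a sample almost surely, while regions with $p_i = 0$ contribute nothing to either side, so the sums over $\mT_S$ may be replaced by sums over $[K]$.

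Passing to this limit in Theorem~\ref{cor-gen-true-synthetic-data} turns the right-hand side into $F(P_g, \vh) + \sum_{i=1}^K (p_i - p^g_i) F_i(P_0, \vh) + L_h \sum_{i=1}^K \big[ p^g_i \bar{d}_h(P_0, P_g \mid \gZ_i) + p_i \bar{d}_h(P_0, P_0 \mid \gZ_i) \big]$. For the left-hand side I would use the partition identity $F(P_0, \vh) = \sum_{i=1}^K p_i F_i(P_0, \vh)$, which follows by conditioning the expectation $\E_{\vz \sim P_0}[\ell(\vh, \vz)]$ on the events $\vz \in \gZ_i$. The decisive step is then purely algebraic: subtracting $\sum_{i=1}^K (p_i - p^g_i) F_i(P_0, \vh)$ from both sides cancels the $p_i F_i$ contributions and leaves exactly $\sum_{i=1}^K p^g_i F_i(P_0, \vh)$ on the left, which is the first claimed inequality.

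To obtain the second inequality I would collapse the local discrepancies into their global counterparts using $\bar{d}_h(P_0, P_g) = \sum_{i=1}^K p^g_i \bar{d}_h(P_0, P_g \mid \gZ_i)$ (noted in the text) and the analogous $\bar{d}_h(P_0, P_0) = \sum_{i=1}^K p_i \bar{d}_h(P_0, P_0 \mid \gZ_i)$, substituting them directly into the bracketed sum. The main obstacle I anticipate is making this collapse rigorous: the conditional discrepancy $\bar{d}_h(P_0, P_g \mid \gZ_i)$ conditions \emph{both} arguments on lying in $\gZ_i$, whereas a naive law-of-total-expectation decomposition of $\bar{d}_h(P_0, P_g)$ would partition according to a single argument. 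One therefore has to argue that the cross-region contributions are handled consistently, most cleanly by taking this weighted-sum representation as the operative definition of the global discrepancy relative to the partition $\Gamma(\gZ)$, so that the two sides agree by construction. Granting that identity, the substitution is immediate and gives the second inequality.

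The genuinely technical content lies entirely in the $n \to \infty$ passage for the robustness terms: a law-of-large-numbers argument showing that the empirical averages $\gR_h(\mS_i, \gZ_i \mid \cdot)$ converge to the stated population discrepancies, together with the interchange of the limit and the finite sum over $[K]$ and the almost-sure stabilization of $\mT_S$. Once those convergences are granted, the remainder of the argument is bookkeeping.
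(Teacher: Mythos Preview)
Your proposal is correct and follows essentially the same route as the paper: the corollary is obtained precisely as the $n \to \infty$ limit of Theorem~\ref{cor-gen-true-synthetic-data}, using the convergences $\frac{n_i}{n}\to p_i$, $\gR_h(\mS_i,\gZ_i\mid P_g)\to\bar{d}_h(P_0,P_g\mid\gZ_i)$, $\gR_h(\mS_i,\gZ_i\mid P_0)\to\bar{d}_h(P_0,P_0\mid\gZ_i)$ and $F(\mS_i,\vh)\to F_i(P_0,\vh)$ recorded in the text, followed by the same algebraic cancellation you describe and the weighted-sum identity $\bar{d}_h(P_0,P_g)=\sum_i p^g_i\,\bar{d}_h(P_0,P_g\mid\gZ_i)$. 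Your observation that this last identity is best read as the operative definition of the global discrepancy relative to the partition is a fair reading; the paper simply states it without further justification.
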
 

This corollary provides a theoretical support for the intuition that a better synthetic distribution $P_g$ should be closer to the true one $P_0$. This can be derived from the discrepancy $\bar{d}_h(P_0, P_g )$ in our bound: The smaller this discrepancy is, the tighter bound on the test error of $\vh$ we will get. Note that the left-hand side of the bound in Corollary~\ref{cor-gen-true-synthetic-distribution} represents the macro-level average loss of $\vh$.

It is worth highlighting an important implication of Corollary~\ref{cor-gen-true-synthetic-distribution}: \textbf{it suffices for the model $\vh$ to perceive a small distance between $P_0$ and $P_g$}. This perspective differs fundamentally from traditional assumptions, which require the two distributions to be objectively close in some metric space. Our result suggests that even if $P_0$ and $P_g$ are far apart in reality, strong generalization can still be achieved as long as the model $\vh$ treats them as similar and incurs a low synthetic loss.

\section{Algorithmic Design}
\label{algo_sec}

This section presents how to use our theoretical bound on the generalization error to design an efficient algorithm to train a few-shot  model, using synthetic data. We first discuss the main idea, and then the implementation details. While our discussion here focuses on classification problems, we believe that it is general enough to be applied to many other settings including regression.

\subsection{Minimizing the Generalization Bound}
\label{sec:optim-problem}

To guarantee the performance of models trained with synthetic data on real data distribution to be small, we aim to minimize the R.H.S. of the bound~(\ref{eq-thm-gen-true-synthetic-data}). If we use a Lipschitz loss function for supervised learning problem such as  absolute loss, the Lipschitz constant $L_h$ will not change. So we can ignore it from the bound above (e.g., by rescaling) from optimization perspective. For the sake of clarity, we rewrite the bound as the sum of the following terms:
\begin{align}
    A_1 = &  \sum_{i \in \mT_S} \frac{g_i }{g} \bar{d}_h(\mG_i,\mS_i)    \\
    A_2 = &  \sum_{i \in \mT_S} [\frac{g_i }{g} \mathcal{R}_h(\mG_i, \gZ_i | P_g) + \frac{n_i }{n} \mathcal{R}_h(\mS_i, \gZ_i | P_0)]   \\
    A_3 = &  F(\mS, \vh) - \sum_{i \in \mT_S} \frac{g_i }{g} F(\mS_i, \vh)    \\
    A_4 = &  C_{\gH}(n^{-0.5} + g^{-0.5})\sqrt{2K\ln2 + 2\ln(2/\delta)} \\
    A_5 = &  F(P_g, \vh)
\end{align}
Note that we use $\sum_{i \in \mT_S} \frac{n_i}{n} F(\mS_i, \vh) = F(\mS, \vh)$ here.
Three components play a central role in our optimization problem: the partition $\Gamma$; the synthetic distribution $P_g$ and generated data $\mG$; and the classifier $\vh$. Because the generated data components will be partly addressed with a fine-tuning  step for the generator (see Section \ref{sec:fine-tuning} for empirical evidence), we focus on formalizing the other two. 

\subsubsection{Partition Optimization}
\label{partion-optimization}

In the previous bound, given a fixed real dataset $\mS$ and a fixed generated dataset $\mG$, we will minimize over the set of partitions $A_2$ (the quantity of distance between data point and its own regions) which depends on the partition the most in all of the above quantities. This optimization problem can be formalized as:
\begin{gather*}
    \min_{\Gamma(\gZ)} [\sum_{i \in \mT_S} \frac{g_i}{g} \mathcal{R}_h(\mG_i, \gZ_i | P_g) + \sum_{i \in \mT_S} \frac{n_i}{n} \mathcal{R}_h(\mS_i, \gZ_i | P_0)] 
\end{gather*}
If we assume that in each region $i$, the amount of real and synthetic data ($n_i$ and $g_i$) are sufficiently large so that the empirical distributions of the real and synthetic data can approximate well the true real and synthetic distributions $P_0$ and $P_g$ on that region, then the partition optimization objectives can be upper bounded by K-means clustering optimization objectives on prediction space. It serves as motivation for our choice of K-means clustering to solve the partitioning optimization. The proof appears in Appendix~\ref{app:proof-proposition}. 

\subsubsection{Model Optimization}
\textbf{Utilizing the few-shot learning setting}. The other component that heavily affects our generalization bound is the classifier $\vh$ itself. Because the classifier affects all terms mentioned above except $A_4$, so we can formulate them as minimizing the sum of the remaining expression. Moreover, since we are focusing on few-shot real data, the intra-region distance term for real data ($\mathcal{R}_h(\mS_i, \gZ_i | P_0)$) could be ignored  as it becomes negligibly small in this setting, and may not affect much to the model optimization. 

By minimizing the classification losses on real and synthetic data ($A_3$ and $A_4$), we fine-tune the pretrained models on the target datasets. Optimizing the discrepancy term ($A_1$) then aligns the model’s predictions between real and synthetic samples within each region, reducing any mismatch. Finally, minimizing the robustness term ($A_2$) ensures that the model maintains stable predictions on synthetic data, thereby enhancing overall predictive quality and improving generalization.

\subsection{Algorithmic Details}

We propose an algorithm (Algorithm \ref{gen_data_algo}) to address the two interrelated optimization problems discussed earlier through a two-phase optimization process. In the first phase, the algorithm optimizes data partitioning to address the initial optimization problem. In the second phase, it refines the classifiers to tackle the subsequent problem. This two-phase strategy is designed to minimize the loss function $\mathcal{L}$, which is formulated as a combination of distribution matching, robustness terms, and classification loss. Figure \ref{fig:pipeline} visually demonstrates this general pipeline. The convergence of the loss function and the associated regularization terms is empirically demonstrated in Section \ref{sec:fine-tuning}.

\textbf{Loss function}: 
\begin{multline}
\label{eq:overall loss}
    \mathcal{L} = \lambda F(\mS, \vh) + F(\mG, \vh)  \\
    + \lambda_1 \sum_{i \in \mT_S} \sum_{s \in \mS_i, \vg \in \mG_i} \frac{g_i}{g} \frac{1}{|\mG_i| |\mS_i|} 
    \|\vh(\vs) - \vh(\vg) \| \\
    + \lambda_2 \frac{1}{g} \sum_{i \in \mT_S} \sum_{\vg_1, \vg_2 \in \mG_i} 
    \frac{1}{g_i} \|\vh((\vg_1) - \vh(\vg_2) \|
\end{multline}

This loss function  is directly inspired by the R.H.S. of Eq.~\ref{eq-thm-gen-true-synthetic-data} and model optimization problem. To minimize $A_3$ and $A_5$, we substitute them with the empirical loss on real and synthetic data. To address $A_1$ and $A_2$, we include them as regularization components in the loss function. Note that the expectation part in robustness term $A_2$ was calculated with its empirical version, utilizing other synthetic data in the same region to compute.

Our loss function is then defined as the sum of the empirical loss in real and synthetic data along with two regularization terms.  We introduced hyper-parameters $\lambda$, $\lambda_1$, and $\lambda_2$ to control the influence of these terms in the optimization process. The empirical loss is modeled using the traditional cross-entropy loss, while the discrepancy and robustness terms are calculated using the $\ell_2$-norm to ensure smoother optimization. 

\begin{algorithm}[t]
\caption{Fine-tuning few-shot models with synthetic data}
\label{gen_data_algo}
\textbf{Input}: Real dataset $\mS$, number  $g$ of synthesis samples, (conditional) Pretrained generator models $\gG$
\begin{algorithmic}[1]
    \STATE Initialize centroids $\vz$ for every local area
    \STATE Fine-tuning generator $\gG$ by real dataset $\mS$ with LoRA
    \STATE Generate $g$ synthetic images from generator $\gG$
    \STATE Use K-means clustering on both real and synthetic images to obtain partition $\Gamma(\gZ)$ \\
    
    \FOR{each mini-batch $A$}
        \STATE Assign datapoints to their nearest clusters
        \STATE Train the model $\vh$ using the loss function $\mathcal{L}$ on the combined dataset $\mS_A \cup \mG_A$ that includes both real data and synthetic data. \COMMENT{Refer to \eqref{eq:overall loss}}.
    \ENDFOR
\end{algorithmic}
\end{algorithm}
We now describe the details of our algorithm: 

\begin{figure*}[h!]
    \centering  \includegraphics[width=0.75\textwidth]
    {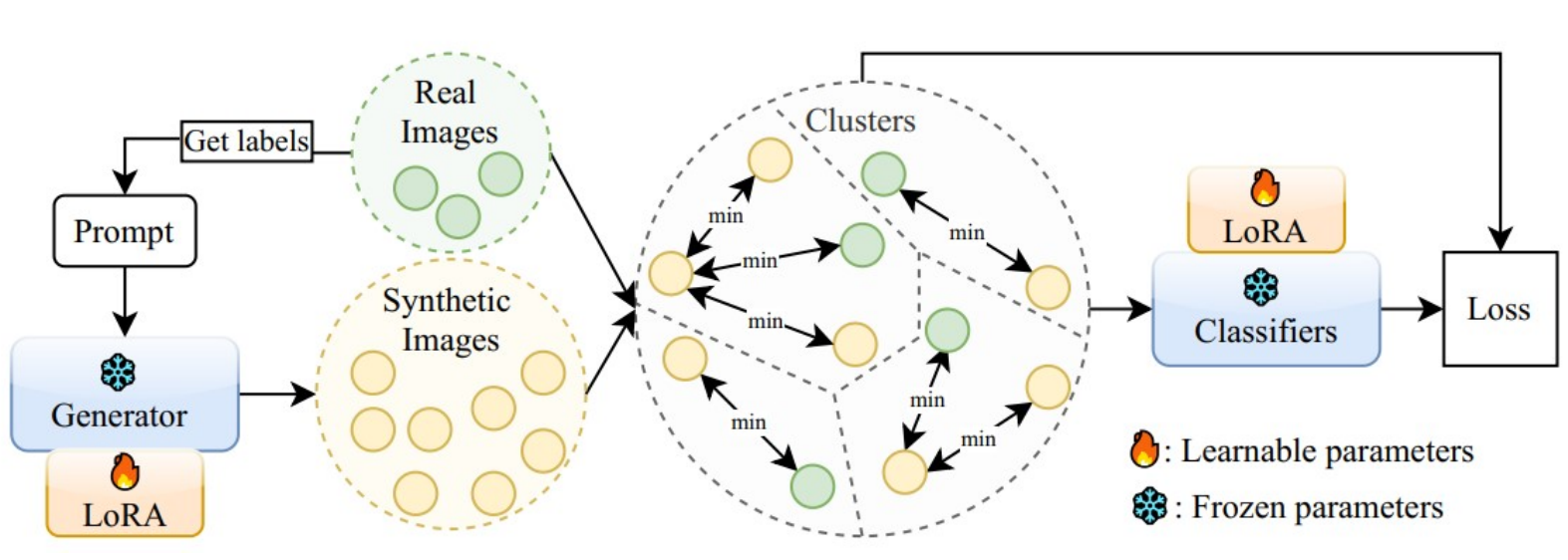}
     \caption{Illustration of the overall algorithm pipeline outlined in Algorithm \ref{gen_data_algo}. First, we generate synthetic images using the labels of real images. Subsequently, both real and synthetic images are clustered. Finally, the images are fed into the classifier. The final loss is calculated based on the model’s predictions for the samples that belong to the same cluster, thereby reducing the prediction discrepancy between real and synthetic images and between synthetic images themselves within the same cluster.}
    \label{fig:pipeline}
    \vspace{-0.2cm}
\end{figure*}

    \textbf{1. Initialization}: We begin with initializing the partitions and noise vectors essential for the iterations. Each iteration is one iteration of training the classifier. We fine-tune the generator with LoRA \cite{hu2022lora}, with the same loss and procedure in DataDream \cite{kim2024datadream}. Then, we use this generator attached with LoRA module to generate synthetic data.
    
    \textbf{2. Main Optimization process}:
    \begin{itemize}
        \item \textit{Partition Optimization:} In part due to reasons mentioned in Section \ref{sec:optim-problem} and due to the simplicity of the K-means clustering algorithm, we decided to use it as a partition optimization algorithm. Furthermore, to save computation, we decided to perform clustering on data space avoid recomputing the clustering at each iteration.
        \item \textit{Model Optimization:} With a stabilized partition, we optimize the classifier based on loss function $\mathcal{L}$. Forward real and synthetic data through models and compute necessary components for loss computation (the  terms $\vh(\vs)$  with real data $\vs \in \mS_i$, and $\vh(\vg)$ with synthetic data $\vg \in \mG_i$). These terms are computed as the softmax outputs of classification models.
        \item \textit{Loss Computation and Gradient Descent:} We calculate the loss function $\mathcal{L}$ based on computed components. We update model $\vh$ via gradient descent.
    \end{itemize}

Although this implementation achieves impressive results—thanks to the high-quality data produced by the fine-tuned generator—it also incurs computational costs, both for fine-tuning and for generating large volumes of synthetic data. As an alternative, we introduce a \emph{lightweight} version that requires no generator fine-tuning phase and uses only about one-eighth as much synthetic data. The corresponding pseudocode and detailed description appear in Appendix~\ref{lightweight_app} (Algorithm~\ref{lightweight_algo}). As shown in the next section, even this streamlined approach provides competitive performance, compared to the STATE of the art.

\section{Experiments} \label{sec-Experiments}

In this section, we validate the effectiveness of our proposed algorithm on few-shot image classification problems. First, we describe the experimental settings, including the baselines, datasets, and implementation details. Then, the main results of fine-tuning models are provided, followed by ablation studies and some additional analysis.

\begin{table*}[t]
    \caption{Few-shot image classification with CLIP ViT-B/16. All experiments are conducted with 16 real shots and 500 synthetic images per-class, except our lightweight version, where only 64 synthetic images per class were utilized. R/S columns denoted whether to use real or synthetic data, respectively. Each result of a method was averaged from 3 random seeds, except our full version, where we fixed the same seed 0 for all datasets.}
    \vspace{8pt}
    \begin{tabular}{l|cc|cccccccccc|c}
    \hline
    Method & R & S & IN & CAL & DTD & EuSAT & AirC & Pets & Cars & SUN & Food & FLO & Avg \\
    \hline
    CLIP (zero-shot) & & & 70.2 & 96.1 & 46.1 & 38.1 & 23.8 & 91.0 & 63.1 & 72.2 & 85.1 & 71.8 & 64.1 \\
    \hline
    Real-finetune & \checkmark &  & 73.4 & 96.8 & 73.9 & 93.5 & 59.3 & 94.0 & 87.5 & 77.1 & 87.6 & 98.7 & 84.2\\
    IsSynth & \checkmark & \checkmark & 73.9 & 97.4 & 75.1 & 93.9 & 64.8 & 92.1 & 88.5 & \textbf{77.7} & 86.0 & 99.0 & 84.8 \\
    DISEF & \checkmark & \checkmark & 73.8 & 97.0 & 74.3 & 94.0 & 64.3 & 92.6 & 87.9 & 77.6 & 86.2 & 99.0 & 84.7\\
    DataDream$_{cls}$ & \checkmark & \checkmark & 73.8 & 97.6& 73.1 & 93.8 & 68.3 & 94.5 & 91.2 & 77.5 & 87.5 & 99.4 & 85.7\\
    DataDream$_{dset}$ & \checkmark & \checkmark & \textbf{74.1} & 96.9 & 74.1 & 93.4 & 72.3 & \textbf{94.8} & 92.4 & 77.5 & 87.6 & \textbf{99.4}&  86.3\\
    \hline
    Ours (lightweight) & \checkmark & \checkmark & 73.7 & \textbf{97.9}  & \textbf{75.5} & 94.2 & 71.5 & 94.5 & 90.2 & 77.6 & 90.0 & 99.0 & 86.4\\
    Ours (full)& \checkmark & \checkmark & 
    73.8 & 97.3  & 74.5 & \textbf{94.7} & \textbf{74.3} & 94.6 & \textbf{93.1} & \textbf{77.7} & \textbf{90.4} & 99.3 & \textbf{87.0}\\
    \hline
    \end{tabular}
    \label{tab:main}
\end{table*}

\subsection{Experimental Settings}

\textbf{Baselines.}
 We compared our solutions with other STATE-of-the-art methods in Few-shot Image classification: DataDream \cite{kim2024datadream}, DISEF \cite{dacosta2023diversified}, and IsSynth \cite{he2023synthetic}. All of the results of the baseline methods were obtained from the DataDream paper, except for the DTD \cite{dtd} dataset, where we reproduced the results due to an erroneous implementation in their training/evaluation data split.
 
\textbf{Datasets.}
Similar to baselines methods, we evaluate our method on 10 common datasets for few-shot image classification: FGVC Aircraft \cite{imagenet} and Caltech101 \cite{caltech101} for general object recognition, FGVC Aircraft \cite{fgvcaircraft} for fine-grained aircraft data, Food101 \cite{food101} for common food objects, EuroSAT \cite{eurosat} for satelittle images, Oxford Pets \cite{pets} for discrimination of cat and dog types, DTD \cite{dtd} for texture images, SUN397 \cite{sun397} for scene understanding, Stanford Cars\cite{cars} for cars data, and Flowers102 \cite{flowers102} for flower classes.

\textbf{Experimental details.}
We fine-tuned the CLIP ViT-B/16 image encoder with LoRA \cite{hu2022lora}. To be consistent with the baselines, the generator used is Stable Diffusion (SD) \cite{stable-diffusion} version 2.1. Similarly to the baselines, the guidance scale of SD is set to be 2.0 to enhance diversity. In the \emph{lightweight} version, only 64 images per class were generated without the need to fine-tune the generator, and in the \emph{full} version 500 images per class were synthesized from LoRA-attached fine-tuned Stable Diffusion. For the lightweight version, inspired by RealFake \cite{yuan2024realfake}, we improved the quality of the synthesized data using negative prompts \texttt{"distorted, unrealistic, blurry, out of frame"}. 

The clustering phase was performed with the FAISS library \cite{douze2024faiss}. The hyperparameters to be tuned are: $\lambda_1,\lambda_2$ to control the discrepancy and robustness terms, number of clusters, learning rates, and weight decay. The values of $\lambda_1,\lambda_2$ vary between the data sets, but consistently maintain the ratio of 1/10, since we observe that this ratio brings the best balance between them and yields the best results. The hyperparameter $\lambda$ was chosen at 4 for all datasets except Stanford Cars, where we set it at 1. This choice resembles the choice in DataDream, where they select the weight for cross-entropy loss of real and synthetic data to be 0.8 and 0.2, respectively. For the number of clusters, we generally choose it twice as the number of classes of each dataset, except for ImageNet, where we set it to half of them. Analysis on the optimal choice of this hyperparameter can be found in the next section of Ablation Studies. More details of the hyperparameter settings can be found in Appendix~\ref{app:hyperparam}.

\subsection{Main Results}
Table~\ref{tab:main} presents the main experimental results in ten datasets. Our method (in both lightweight and full variants) consistently outperforms existing STATE-of-the-art approaches, ranking first on 7 of the 10 datasets and second on 2 others. In the datasets where we do not achieve the top score, our results are within 0.1--0.3\% of the best-performing method. On average, our lightweight variant performs on par with the strongest baseline (DataDream), while our full variant surpasses it by an additional 0.6\%. The most notable gains are on datasets Food101 and challenging FGVC Aircraft, where our method improves performance by more than 2\%.

\subsection{Ablation Studies}

\textbf{Effectiveness of Regularization Terms}.
We present results of ablation studies on the regularization terms in Table \ref{tab:ablation}. There are four settings: no regularization, adding two terms of discrepancy and robustness subsequently, and adding both of them. As we can see from the results, adding the introduced regularization terms has positive effects on  the results, with most of them have lead to increased performance.

\begin{table}[t]
\centering
\caption{Ablation of the loss function components.}
\label{tab:ablation}
\begin{tabular}{cc|cccc}
\toprule
\textbf{Discre.} & \textbf{Rob.} & \textbf{EuroSAT} & \textbf{DTD} & \textbf{AirC} & \textbf{Cars} \\ \midrule
 &  & 93.5 & 74.1 & 72.5 & 92.6 \\
 & \checkmark & 94.6 & 74.4 & 73.1 & \textbf{93.1} \\
\checkmark &  & 94.3 & 74.3 & \textbf{74.8} & 93.0\\
\checkmark & \checkmark & \textbf{94.7} & \textbf{74.5} & 74.3 &  \textbf{93.1}\\ \bottomrule
\end{tabular}
\end{table}

\textbf{Analysis of partitioning}.
We investigate how the number of clusters affects performance by varying it from a single cluster (where all data lie in one partition) to multiples of the total number of classes. We observe that the optimal choice is typically around twice the number of classes. We hypothesize that fewer clusters produce ambiguous decision boundaries, while too many clusters overly disperse the data, weakening regularization and degrading results. Based on these findings, we set the number of clusters to twice the number of classes in all main experiments, except for ImageNet, where we choose half of the classes as the number of clusters to reduce computational overhead. The detail of experiments can be found in Figure \ref{fig:abla-partition}, Appendix \ref{app:par-exp}.

\begin{table}[t]
\centering
\caption{Methods performance on CLIP-Resnet50.}
\label{tab:resnet}
\begin{tabular}{l|cccc}
\toprule
 \textbf{Methods} & \textbf{AirC} & \textbf{Cars} & \textbf{Food} & \textbf{CAL} \\ \midrule
 Real fine-tune & 61.57 & 78.86 & 63.52 & 93.29 \\
 IsSynth & 70.94 & 90.82 & 68.77 & 94.54 \\
 DISEF & 65.99 & 79.18 & 70.10 & 94.34\\
DataDream$_{cls}$ & 79.21 & 92.99 & 66.70 & 94.37 \\ 
DataDream$_{dset}$ & 81.46 & 93.30 & 66.63 & \textbf{94.62} \\ \midrule
Ours & \textbf{82.67} & \textbf{93.71} & \textbf{70.35} & 94.17 \\
\bottomrule
\end{tabular}
\vspace{-0.2cm}
\end{table}

\textbf{Results on different architectures}.
We provide additional results of our methods compared to other different methods on fine-tuning of the pre-trainedCLIP-Resnet50 in Table \ref{tab:resnet}. We select 4 datasets of FGVC Aircraft, Stanford Cars, Food101, and Caltech101 as in additional experiments in \cite{kim2024datadream}. Our method outperforms others on 3 out of 4 datasets, while being competitive on the last one. This additional experiment shows the robustness of our method in different architectures, further demonstrating its superiority over existing current approaches. 

\subsection{Loss Convergence, Influence of Fine-Tuning, and Correlation with Discrepancy and Robustness}
\label{sec:fine-tuning}

\begin{figure*}[h!]
    \centering
\includegraphics[width=\linewidth]{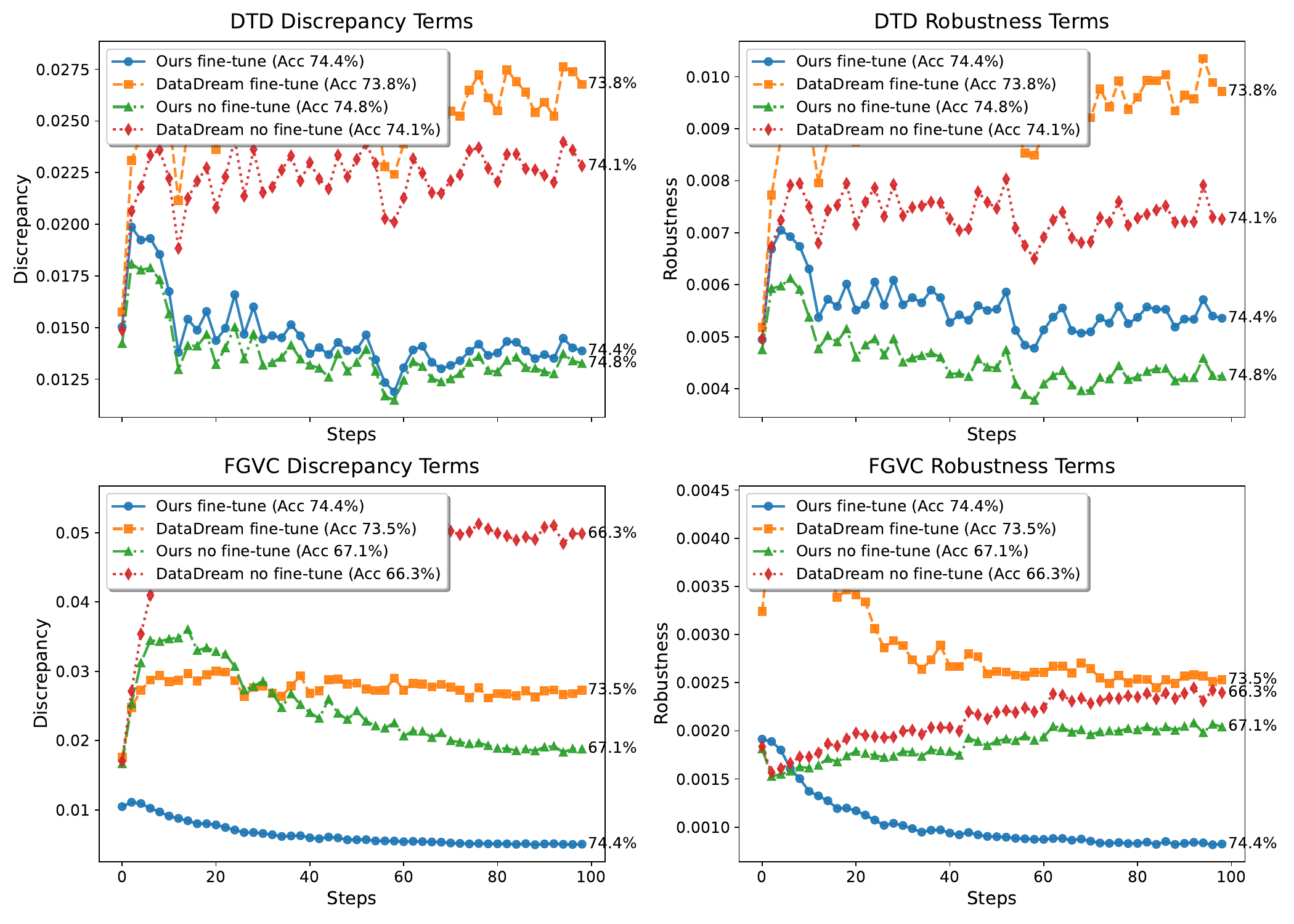}
    \caption{Visualization of Discrepancy and Robustness Terms Across Settings for DTD and FGVC Aircraft Datasets. The top row shows results for the DTD dataset, with discrepancy (left) and robustness (right) terms, while the bottom row shows results for the FGVC Aircraft dataset. Accuracy values are annotated at the end of each corresponding line.}
    \label{fig:analysis}
\end{figure*}

To further investigate the behavior of our algorithm and empirically validate the correctness of our generalization error bound, we compare four settings: using either our proposed loss function or the DataDream cross-entropy loss, each with or without a generator fine-tuning step. We focus on two challenging datasets: \textbf{DTD}, where fine-tuning the generator (as in DataDream and our full approach) unexpectedly degrades performance compared to methods without fine-tuning (IsSynth, DISEF, and our lightweight variant), and \textbf{FGVC Aircraft}, where Stable Diffusion is known to produce lower-quality images. We track the discrepancy and robustness terms (two regularization terms without multiplying by hyperparameters) throughout the training process of full version (measured twice per epoch for a total of 100 steps) and record the accuracy of each method. Our key observations from Figure \ref{fig:analysis} are:
\begin{itemize}
    \item \textbf{Effective optimization and performance gains.} Our proposed algorithm successfully minimizes both the discrepancy and robustness terms, showing smoother and lower values compared to settings that do not incorporate these terms. Moreover, it achieves superior accuracy in both scenarios (with and without generator fine-tuning).
    
    \item \textbf{Impact of generator fine-tuning.} Fine-tuning the generator can substantially boost performance when the initial generative model is poorly suited to the domain (e.g., improving accuracy by up to 7\% on FGVC Aircraft). Across most other datasets, methods that include this fine-tuning step (DataDream and our full version) also demonstrate improved results comparing to ones without them. Results validate our claim that fine-tuning can partially reduce the discrepancy and robustness terms in Section \ref{sec:optim-problem}.
    
    \item \textbf{Empirical support for our theoretical framework.} Models that achieve smaller discrepancy and robustness values generally exhibit better accuracy, indicating that these metrics serve as reliable indicators of performance and generalization ability. Our ablations suggest that the robustness term, which has been overlooked in prior studies, seem to be really important to ensure high generalization of the trained model. 
\end{itemize}

\section{Conclusion}
In this paper, we introduce a \emph{theoretically guided} method for training few-shot models with synthetic data. We begin by deriving a generalization bound that reveals how the misalignment between real and synthetic data, as well as model robustness, affects performance and generalization. Building directly on these theoretical insights, we propose the first algorithm with theoretical guarantees to minimize this bound, thereby maximizing the performance of few-shot models. Extensive experiments on ten benchmark datasets demonstrate that our method consistently outperforms the STATE-of-the-art.

Future works can come from additional analysis to fully solve the partition and model selection problems, and find an efficient way to directly use this theoretical framework for optimizing synthetic data through fine-tuning generator or filtering by using our loss function as a criterion. Moreover, thanks to the versatility and simplicity of this generalization bound, one can use them for enhancing performance of related domains such as adversarial training or domain adaptation with or without additional synthetic data.  



\section*{Impact STATEment}
This work aims to leverage synthetic data for improving few-shot classification models, which holds promise for expanding machine learning applications in real-world scenarios with limited labeled data. In most cases, by reducing reliance on large-scale annotated datasets, our approach can potentially democratize access to high-performance models. However, when dealing with synthetic data, the method, if being misused, may facilitate malicious activities such as model poisoning or creation of deceptive content. Furthermore, part of the ideas can be extended to related topics in Machine Learning such as adversarial training and domain adaptation, thus sharing the same societal impact on those fields.


\bibliography{complexity}
\bibliographystyle{icml2025}

\newpage
\appendix
\onecolumn

\section{Proofs}\label{sec-proofs}

\subsection{Proof of Theorem \ref{thm-gen-true-synthetic-data}} \label{app-thm-gen-true-synthetic-data}

\begin{proof}  
In the following analysis, we will denote $F(\emptyset, \vh) = 0$. We first observe that:
\begin{eqnarray}
\nonumber
F(P_0, \vh)  &=& F(P_g, \vh) + F(P_0, \vh) - F(\mS, \vh) + F(\mS, \vh) - \sum_{i \in \mT_S}  \frac{g_i}{g}  F(\mS_i, \vh) \\
\label{app-thm-gen-true-synthetic-data-01}
&& + \sum_{i \in \mT_S}  \frac{g_i}{g}  F(\mS_i, \vh) -F(\mG, \vh) +F(\mG, \vh) - F(P_g, \vh) 
\end{eqnarray}

Next, we consider 
\begin{eqnarray}
\sum_{i \in \mT_S}  \frac{g_i}{g}  F(\mS_i, \vh) -F(\mG, \vh) 
&=& \sum_{i \in \mT_S}  \frac{g_i}{g}  F(\mS_i, \vh) -  \frac{1}{g}  \sum_{i \in \mT_S}  \sum_{\vu \in \mG_i} \ell(\vh,\vu) \\
&=& \frac{1}{g}   \sum_{i \in \mT_S}   \sum_{\vu \in \mG_i}  \left[ F(\mS_i, \vh) - \ell(\vh,\vu) \right] \\
&=& \frac{1}{g}   \sum_{i \in \mT_S}   \sum_{\vu \in \mG_i} \frac{1}{n_i} \sum_{\vs \in \mS_i}[\ell(\vh,\vs) - \ell(\vh,\vu)] \\
&\le& \frac{1}{g}   \sum_{i \in \mT_S}   \sum_{\vu \in \mG_i} \frac{1}{n_i} \sum_{\vs \in \mS_i} L_h \|\vh(\vs) - \vh(\vu) \| \\
&=& \frac{L_h }{g}   \sum_{i \in \mT_S}   \sum_{\vu \in \mG_i} \frac{1}{n_i} \sum_{\vs \in \mS_i}\|\vh(\vs) - \vh(\vu) \| \\
&=& \frac{L_h }{g}   \sum_{i \in \mT_S}   \sum_{\vu \in \mG_i} \bar{d}_h(\vu,\mS_i) \\
\label{app-thm-gen-true-synthetic-data-03}
&=& L_h   \sum_{i \in \mT_S} \frac{g_i }{g} \bar{d}_h(\mG_i,\mS_i) 
\end{eqnarray}
where we have used the fact that $\ell(\vh,\vs) - \ell(\vh,\vu) \le L_h \|\vh(\vs) - \vh(\vu) \|$, due to the Lipschitz continuity of $\ell$.

By Theorem~\ref{thm-Local-Robustness-generalization}, for any $\delta>0$, we have each of the followings with probability at least $1 - \delta/2$:
\begin{eqnarray}
\label{app-thm-gen-true-synthetic-data-04}
F(\mG, \vh) - F(P_g, \vh) &\le& L_h \sum_{i \in \mT_S} \frac{g_i}{g}  \gR_h(\mG_i, \gZ_i | P_g)   + C\sqrt{\frac{2K\ln2 + 2\ln(2/\delta)}{g}} \\
\label{app-thm-gen-true-synthetic-data-05}
F(P_0, \vh) - F(\mS, \vh)  &\le& L_h \sum_{i \in \mT_S} \frac{n_i}{n}  \gR_h(\mS_i, \gZ_i | P_0) + C\sqrt{\frac{2K\ln2 + 2\ln(2/\delta)}{n}} 
\end{eqnarray}
Combining (\ref{app-thm-gen-true-synthetic-data-01}) with  (\ref{app-thm-gen-true-synthetic-data-03},\ref{app-thm-gen-true-synthetic-data-04},\ref{app-thm-gen-true-synthetic-data-05}) will complete the proof.
\end{proof}

\subsection{Proof of Theorem \ref{cor-gen-true-synthetic-data}} \label{app-cor-gen-true-synthetic-data}

\begin{proof}  
Using the same arguments as before, we first observe that:
\begin{eqnarray}
\label{app-cor-gen-true-synthetic-data-01}
F(P_0, \vh)  &=& F(P_0, \vh) - F(\mS, \vh) + F(\mS, \vh) - \sum_{i \in \mT_S}  \frac{g_i}{g}  F(\mS_i, \vh) 
 + \sum_{i \in \mT_S}  \frac{g_i}{g}  F(\mS_i, \vh) -F(\mG, \vh) +F(\mG, \vh)
\end{eqnarray}

Next, we consider 
\begin{eqnarray}
\sum_{i \in \mT_S}  \frac{g_i}{g}  F(\mS_i, \vh) -F(\mG, \vh) 
&=& \sum_{i \in \mT_S}  \frac{g_i}{g}  F(\mS_i, \vh) -  \frac{1}{g}  \sum_{i \in \mT_S}  \sum_{\vu \in \mG_i} \ell(\vh,\vu) \\
&=& \frac{1}{g}   \sum_{i \in \mT_S}   \sum_{\vu \in \mG_i}  \left[ F(\mS_i, \vh) - \ell(\vh,\vu) \right] \\
&=& \frac{1}{g}   \sum_{i \in \mT_S}   \sum_{\vu \in \mG_i} \frac{1}{n_i} \sum_{\vs \in \mS_i}[\ell(\vh,\vs) - \ell(\vh,\vu)] \\
&\le& \frac{1}{g}   \sum_{i \in \mT_S}   \sum_{\vu \in \mG_i} \frac{1}{n_i} \sum_{\vs \in \mS_i} L_h \|\vh(\vs) - \vh(\vu) \| \\
&=& \frac{L_h }{g}   \sum_{i \in \mT_S}   \sum_{\vu \in \mG_i} \frac{1}{n_i} \sum_{\vs \in \mS_i}\|\vh(\vs) - \vh(\vu) \| \\
&=& \frac{L_h }{g}   \sum_{i \in \mT_S}   \sum_{\vu \in \mG_i} \bar{d}_h(\vu,\mS_i) \\
\label{app-cor-gen-true-synthetic-data-03}
&=& L_h   \sum_{i \in \mT_S} \frac{g_i }{g} \bar{d}_h(\mG_i,\mS_i) 
\end{eqnarray}
where we have used the fact that $\ell(\vh,\vs) - \ell(\vh,\vu) \le L_h \|\vh(\vs) - \vh(\vu) \|$, due to the Lipschitz continuity of $\ell$.

By Theorem~\ref{thm-Local-Robustness-generalization}, for any $\delta>0$, we have the following with probability at least $1 - \delta$:
\begin{eqnarray}
\label{app-cor-gen-true-synthetic-data-04}
F(P_0, \vh) - F(\mS, \vh)  &\le& L_h \sum_{i \in \mT_S} \frac{n_i}{n}  \gR_h(\mS_i, \gZ_i | P_0) + C\sqrt{\frac{2K\ln2 + 2\ln(1/\delta)}{n}} 
\end{eqnarray}
Combining (\ref{app-cor-gen-true-synthetic-data-01}) with  (\ref{app-cor-gen-true-synthetic-data-03},\ref{app-cor-gen-true-synthetic-data-04}), we have the following with probability at least $1 - \delta$:
\begin{eqnarray}
\nonumber
F(P_0, \vh)  &\le& L_h \sum_{i \in \mT_S} \frac{n_i}{n}  \gR_h(\mS_i, \gZ_i | P_0) + C\sqrt{\frac{2K\ln2 + 2\ln(1/\delta)}{n}}  + F(\mS, \vh) - \sum_{i \in \mT_S}  \frac{g_i}{g}  F(\mS_i, \vh)  \\
\label{app-cor-gen-true-synthetic-data-05}
& & + L_h   \sum_{i \in \mT_S} \frac{g_i }{g} \bar{d}_h(\mG_i,\mS_i) + F(\mG, \vh)
\end{eqnarray}
As $g \rightarrow \infty$, observe that $\bar{d}_h(\mG_i,\mS_i) \rightarrow \gR_h(\mS_i, \gZ_i | P_g)$ and $ F(\mG, \vh) \rightarrow F(P_g, \vh)$ and $\frac{g_i }{g} \rightarrow p^g_i$. Combining those facts with (\ref{app-cor-gen-true-synthetic-data-05})  completes the proof.
\end{proof}

\subsection{Some necessary bounds}

\begin{theorem} \label{thm-Local-Robustness-generalization}
Consider a model $\vh$ learned from a dataset $\mS$ with $n$ i.i.d. samples from distribution $P$. Let $C = \sup_{\vz \in \gZ} \ell(\vh, \vz)$. Assume that the loss function $\ell(\vh,\vz)$ is $L_h$-Lipschitz continuous w.r.t $\vh$. For any  $\delta >0$, the following holds  with probability at least $1-\delta$: 
 \begin{eqnarray}
 \label{eq-thm-Local-Robustness-generalization}
   F(P, \vh) &\le& F(\mS, \vh) +   L_h \sum_{i \in \mT} \frac{n_i}{n}  \gR_h(\mS_i, \gZ_i | P) + C\sqrt{\frac{2K\ln2 - 2\ln\delta}{n}} \\
  \label{eq-thm-Local-Robustness-generalization-2}
    F(\mS, \vh) &\le& F(P, \vh) +   L_h \sum_{i \in \mT} \frac{n_i}{n} \gR_h(\mS_i, \gZ_i | P)  + C\sqrt{\frac{2K\ln2 - 2\ln\delta}{n}}
\end{eqnarray}
\end{theorem}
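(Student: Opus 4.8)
The plan is to prove \Cref{thm-Local-Robustness-generalization} as a concentration-plus-local-robustness decomposition. The target inequalities compare the population loss $F(P,\vh)$ with the empirical loss $F(\mS,\vh)$, with the slack governed by the local-robustness term $\sum_{i \in \mT} \frac{n_i}{n}\gR_h(\mS_i,\gZ_i \mid P)$ and a $O(\sqrt{K/n})$ deviation. The key structural idea is to route the comparison through the \emph{partition-conditional} expected losses. For each area $\gZ_i$, define the conditional expected loss $F_i(P,\vh) = \E_{\vz \sim P}[\ell(\vh,\vz) \mid \vz \in \gZ_i]$ and note that, by the law of total expectation, $F(P,\vh) = \sum_{i=1}^K p_i F_i(P,\vh)$ where $p_i = P(\gZ_i)$. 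The first step is to bound the gap between $F_i(\mS_i,\vh)$ (the empirical average over real samples in $\gZ_i$) and $F_i(P,\vh)$ by the local robustness $\gR_h(\mS_i,\gZ_i \mid P)$: since $\gR_h(\mS_i,\gZ_i\mid P) = \frac{1}{|\mS|}\sum_{\vs\in\mS}\E_{\vz\sim P}[\|\vh(\vz)-\vh(\vs)\| : \vz\in\gZ_i]$ and $\ell$ is $L_h$-Lipschitz in $\vh$, each conditional deviation $|F_i(P,\vh) - \ell(\vh,\vs)|$ for $\vs \in \mS_i$ is controlled by $L_h$ times an expected output-distance, so averaging over $\vs\in\mS_i$ gives exactly the robustness term up to the $L_h$ factor.

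The second step is to control the sampling fluctuation coming from how the $n$ i.i.d.\ draws distribute across the $K$ cells, i.e.\ the gap between the empirical cell frequencies $n_i/n$ and the true probabilities $p_i$. This is the origin of the $\sqrt{2K\ln 2 - 2\ln\delta}$ factor: I would invoke a uniform deviation bound over the $2^K$ subsets of cells (hence the $K\ln 2$ entropy term), or equivalently a bounded-differences / McDiarmid argument on the function $\mS \mapsto F(P,\vh) - F(\mS,\vh)$ whose coordinatewise change is at most $C/n$ with $C = \sup_{\vz}\ell(\vh,\vz)$. Applying McDiarmid's inequality to this bounded-difference function yields a deviation of order $C\sqrt{\ln(1/\delta)/n}$; the extra $K\ln 2$ arises because the decomposition into cells requires a union bound (or a covering of the partition structure) across the $K$ regions, which is where the $2^K$ combinatorial factor enters. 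A symmetric application in the reverse direction gives \Cref{eq-thm-Local-Robustness-generalization-2}.

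Concretely, I would write $F(P,\vh) - F(\mS,\vh) = \sum_i \big(p_i - \frac{n_i}{n}\big)F_i(P,\vh) + \sum_i \frac{n_i}{n}\big(F_i(P,\vh) - F_i(\mS_i,\vh)\big)$, then bound the first sum by the concentration term (McDiarmid plus the $K$-cell union bound) and the second sum by the local-robustness term via the Lipschitz step described above. Collecting the two pieces reproduces the right-hand side of \Cref{eq-thm-Local-Robustness-generalization}.

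The hard part will be getting the combinatorial constant in the concentration term exactly right, namely justifying the $2K\ln 2$ factor rather than a looser $K\ln(1/\delta)$-style bound. This requires carefully setting up the uniform deviation over all ways the partition interacts with the sample — the natural route is a bounded-differences inequality combined with a union bound over the $2^K$ possible sign patterns of $(p_i - n_i/n)$, which produces $\ln(2^{K}) = K\ln 2$ inside the square root. Ensuring that the McDiarmid bounded-difference constant is genuinely $C/n$ (so that the final rate is $C\sqrt{\cdot}/\sqrt{n}$) and that the robustness term separates cleanly from the concentration term — without double-counting the $L_h$-Lipschitz slack — is the delicate bookkeeping I would need to handle with care.
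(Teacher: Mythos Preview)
Your decomposition
\[
F(P,\vh) - F(\mS,\vh) \;=\; \sum_i \Big(p_i - \tfrac{n_i}{n}\Big)F_i(P,\vh) \;+\; \sum_i \tfrac{n_i}{n}\big(F_i(P,\vh) - F(\mS_i,\vh)\big)
\]
and the Lipschitz control of the second sum by $L_h\sum_i \tfrac{n_i}{n}\gR_h(\mS_i,\gZ_i\mid P)$ are exactly the paper's argument. For the first sum the paper simply bounds it by $C\sum_i |p_i - n_i/n|$ and then cites the \emph{Bretagnolle--Huber--Carol inequality} for multinomial $\ell_1$ deviation, $\Pr\big(\sum_i |p_i - n_i/n| \ge t\big) \le 2^K e^{-nt^2/2}$, which yields the $C\sqrt{(2K\ln 2 - 2\ln\delta)/n}$ term directly. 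Your ``Hoeffding/McDiarmid plus union bound over the $2^K$ sign patterns of $(p_i - n_i/n)$'' is precisely the standard proof of that inequality, so you are on the same track and can shortcut by invoking it by name.

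One caution: the alternative you float --- applying McDiarmid directly to $\mS \mapsto F(P,\vh) - F(\mS,\vh)$ --- is not equivalent and should be dropped. That function has mean zero and bounded differences $C/n$, so McDiarmid would give $|F(P,\vh) - F(\mS,\vh)| \le C\sqrt{\ln(2/\delta)/(2n)}$ with no $K\ln 2$ and, more importantly, no robustness term at all; it proves a different (and for this theorem, irrelevant) statement. The $K\ln 2$ enters only through the multinomial $\ell_1$ bound on the \emph{first} piece of your decomposition, not through any union bound layered on top of a global McDiarmid step.
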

\begin{proof}
Firstly, we  make the following  decomposition: \\
\begin{eqnarray}
\label{eq-app-lem-error-decomposition-01} 
 F(P, \vh) 
 =  F(P, \vh) - \sum_{i=1}^{K } \frac{n_i}{n} \E_{\vz \sim  P}[\ell(\vh,\vz) | \vz \in \gZ_i]  
 \\+  \sum_{i=1}^{K } \frac{n_i}{n} \E_{\vz \sim  P} [\ell(\vh,\vz) | \vz \in \gZ_i] - F(\mS, \vh) +  F(\mS, \vh) 
\end{eqnarray}

Secondly, observe that
\begin{align}
\nonumber
 F(P, \vh) - \sum_{i=1}^{K } \frac{n_i}{n} \E_{\vz \sim  P}[\ell(\vh,\vz) | \vz \in \gZ_i] 
  &=  \sum_{i=1}^{K } P(\gZ_i) \E_{\vz \sim  P}[\ell(\vh,\vz) | \vz \in \gZ_i] \notag\\ &- \sum_{i=1}^{K } \frac{n_i}{n} \E_{\vz \sim  P}[\ell(\vh,\vz) | \vz \in \gZ_i]  \\
 &=  \sum_{i=1}^{K } \E_{\vz \sim  P}[\ell(\vh,\vz) | \vz \in \gZ_i] \left[P(\gZ_i) -  \frac{n_i}{n} \right]   \\
  \label{eq-app-lem-error-decomposition-02} 
 &\le C \sum_{i=1}^{K } \left| P(\gZ_i) -  \frac{n_i}{n} \right| 
 \end{align}
Note that ($n_1, ..., n_K$) is an i.i.d multinomial random variable with parameters $n$ and $(P(\gZ_1), ..., P(\gZ_K))$.  Bretagnolle–Huber–Carol inequality, shows the following for any $\epsilon>0$: 
\[\Pr\left( \sum_{i=1}^{K } \left| P(\gZ_i) -  \frac{n_i}{n} \right| \ge 2 \epsilon\right) \le 2^n \exp(-2n\epsilon^2)\]
In other words, for any $\delta>0$, the following holds true with probability at least $1-\delta$:
\begin{eqnarray}
\nonumber
F(P, \vh) - \sum_{i=1}^{K } \frac{n_i}{n} \E_{\vz \sim  P}[\ell(\vh,\vz) | \vz \in \gZ_i]  &\le& C \sum_{i=1}^{K } \left| P(\gZ_i) -  \frac{n_i}{n} \right|  \\
 \label{eq-app-lem-error-decomposition-03} 
 &\le& C\sqrt{\frac{2K\ln2 - 2\ln\delta}{n}}
\end{eqnarray}

Furthermore,
\begin{eqnarray}    
\sum_{i=1}^{K } \frac{n_i}{n} \E_{\vz \sim  P}[\ell(\vh,\vz) | \vz \in \gZ_i] - F(\mS, \vh) 
&=& \sum_{i=1}^{K } \frac{n_i}{n} \E_{\vz \sim  P}[\ell(\vh,\vz) | \vz \in \gZ_i] - \frac{1}{n} \sum_{\vs \in \mS} \ell(\vh,\vs) \\
&=& \sum_{i \in \mT_S} \frac{n_i}{n} \E_{\vz \sim  P} [\ell(\vh,\vz) | \vz \in \gZ_i] - \frac{1}{n} \sum_{i \in \mT_S} \sum_{\vs \in \mS_i} \ell(\vh,\vs) \\
&=& \frac{1}{n} \sum_{i \in \mT_S} \left[ n_i \E_{\vz \sim  P} [\ell(\vh,\vz) | \vz \in \gZ_i] -  \sum_{\vs \in \mS_i} \ell(\vh,\vs) \right] \\
\label{eq-app-lem-error-decomposition-04} 
&=& \frac{1}{n} \sum_{i \in \mT_S} \sum_{\vs \in \mS_i}  \E_{\vz \sim  P} [\ell(\vh,\vz) -  \ell(\vh,\vs) : \vz \in \gZ_i] \\
\label{eq-app-lem-error-decomposition-05} 
&\le& \frac{1}{n} \sum_{i \in \mT_S} \sum_{\vs \in \mS_i}  \E_{\vz \sim  P} [L_h \|\vh(\vz) - \vh(\vs) \| : \vz \in \gZ_i] \\
&=& \frac{L_h}{n} \sum_{i \in \mT_S} \sum_{\vs \in \mS_i}  \E_{\vz \sim  P} [\|\vh(\vz) - \vh(\vs) \| : \vz \in \gZ_i] \\
\label{eq-app-lem-error-decomposition-06} 
&=& L_h \sum_{i \in \mT} \frac{n_i}{n}  \gR_h(\mS_i, \gZ_i | P) 
\end{eqnarray}

Where (\ref{eq-app-lem-error-decomposition-05}) comes from (\ref{eq-app-lem-error-decomposition-04}) since $\ell$ is $L_h$-Lipschitz continuous w.r.t $\vh$. Combining the decomposition (\ref{eq-app-lem-error-decomposition-01}) with (\ref{eq-app-lem-error-decomposition-03}) and (\ref{eq-app-lem-error-decomposition-06}) will arrive at (\ref{eq-thm-Local-Robustness-generalization}). One can use the same arguments as above with reverse order of empirical and population loss to show (\ref{eq-thm-Local-Robustness-generalization-2}), completing the proof.
\end{proof}

\subsection{Proof of property in partition optimization \ref{partion-optimization}}

\label{app:proof-proposition}
\begin{lemma} \label{lemma-cluster}
If $\boldsymbol{x_i} \text{ and } \boldsymbol{x_j}$  are vectors from the same clusters,  $\bar{x}$ is the mean of that cluster, and $n$ is the number of data points in that cluster we have the following expression
\[
\sum_{i,j} \|\boldsymbol{x_i - x_j}\|^2 = \sum_{i \neq j} \|\boldsymbol{(x_i - \bar{x}) - (x_j - \bar{x})}\|^2 = 2n \sum_{i=1}^{n} \|\boldsymbol{x_i - \bar{x}}\|^2
\]
\end{lemma}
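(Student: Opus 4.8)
The plan is to prove the two equalities separately, with the second being the only substantive one. For the first equality, I would simply invoke the termwise algebraic identity $\boldsymbol{x}_i - \boldsymbol{x}_j = (\boldsymbol{x}_i - \bar{x}) - (\boldsymbol{x}_j - \bar{x})$, valid for every pair $(i,j)$, which makes the summands of the two sums literally equal. I would also note that the diagonal terms $i=j$ contribute $0$ on both sides, so that $\sum_{i,j}$ and $\sum_{i\neq j}$ coincide; this justifies passing freely between the two index conventions.

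For the second equality I would introduce the centered vectors $\boldsymbol{y}_i := \boldsymbol{x}_i - \bar{x}$ and expand the squared $\ell_2$-norm through the inner product, writing $\|\boldsymbol{y}_i - \boldsymbol{y}_j\|^2 = \|\boldsymbol{y}_i\|^2 - 2\langle \boldsymbol{y}_i, \boldsymbol{y}_j\rangle + \|\boldsymbol{y}_j\|^2$. Summing over all ordered pairs $(i,j)$ and using bilinearity, the first and third terms each contribute $n\sum_{i}\|\boldsymbol{y}_i\|^2$ (since the free index runs over $n$ values), while the cross term collapses to $-2\langle \sum_i \boldsymbol{y}_i, \sum_j \boldsymbol{y}_j\rangle = -2\|\sum_i \boldsymbol{y}_i\|^2$. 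This reduces the whole double sum to $2n\sum_i \|\boldsymbol{y}_i\|^2 - 2\|\sum_i \boldsymbol{y}_i\|^2$.

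The one genuine observation — and the only point where the hypothesis that $\bar{x}$ is the cluster mean is used — is that $\sum_i \boldsymbol{y}_i = \sum_i \boldsymbol{x}_i - n\bar{x} = 0$. This annihilates the cross term and leaves exactly $2n\sum_i \|\boldsymbol{y}_i\|^2 = 2n\sum_i \|\boldsymbol{x}_i - \bar{x}\|^2$, which is the claimed right-hand side. I do not expect any real obstacle: once the centering substitution and the vanishing of $\sum_i \boldsymbol{y}_i$ are in place, the remainder is a few lines of routine inner-product algebra, and the identity is essentially the familiar pairwise-distance form of the within-cluster variance that underlies the reduction to K-means.
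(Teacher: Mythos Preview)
Your proof is correct and follows essentially the same strategy as the paper's: expand the squared norm via the inner product and exploit the centering identity $\sum_i(\boldsymbol{x}_i-\bar x)=0$. The only cosmetic difference is that you sum over all ordered pairs $(i,j)$ so the cross term factors as $-2\|\sum_i \boldsymbol{y}_i\|^2=0$ directly, whereas the paper sums over $i\neq j$, obtains $(n-1)\sum_i\|\boldsymbol{y}_i\|^2$ from each square term, and evaluates the cross term via $\sum_{j\neq i}\boldsymbol{y}_j=-\boldsymbol{y}_i$; both routes arrive at the same $2n$ coefficient.
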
 

\textbf{Proof:}
If we extract the inside of the 2nd expression above:
\[
\|\boldsymbol{(x_i - \bar{x}) - (x_j - \bar{x})}\|^2 = \|\boldsymbol{x_i - \bar{x}}\|^2 + \|\boldsymbol{x_j - \bar{x}}\|^2 - 2(\boldsymbol{x_i - \bar{x}})^T(\boldsymbol{x_j - \bar{x}})
\]
Due to symmetry, we can evaluate the first two expressions easily:
\[
\sum_{i \neq j} \|\boldsymbol{x_i - \bar{x}}\|^2 = \sum_{i \neq j} \|\boldsymbol{x_j - \bar{x}}\|^2 = (n - 1) \sum_{i=1}^{n} \|\boldsymbol{x_i - \bar{x}}\|^2
\]
The last expression is:
\[
\begin{aligned}
& \text{(Note that } \sum_{j \neq i} (\boldsymbol{x_j - \bar{x}}) = \sum_{j=1}^{n} (\boldsymbol{x_j - \bar{x}}) - (\boldsymbol{x_i - \bar{x}}) = n\boldsymbol{\bar{x}} - n\boldsymbol{\bar{x}} - (\boldsymbol{x_i - \bar{x}}) = -(\boldsymbol{x_i - \bar{x}})\text{)} \\
& \sum_{i \neq j} -2(\boldsymbol{x_i - \bar{x}})^T(\boldsymbol{x_j - \bar{x}}) = -2 \sum_{i=1}^{n} (\boldsymbol{x_i - \bar{x}})^T \sum_{j \neq i} (\boldsymbol{x_j - \bar{x}}) = 2 \sum_{i=1}^{n} (\boldsymbol{x_i - \bar{x}})^T(-(\boldsymbol{x_i - \bar{x}})) \\
& = 2 \sum_{i=1}^{n} \|\boldsymbol{x_i - \bar{x}}\|^2
\end{aligned}
\]
Finally, combining all, we have \( (2(n - 1) + 2) \sum_{i=1}^{n} \|\boldsymbol{x_i - \bar{x}}\|^2 = 2n \sum_{i=1}^{n} \|\boldsymbol{x_i - \bar{x}}\|^2 \)

\textbf{Main Proof}
Now, coming back to the main results, we rewrite here the partition optimization problem:
\begin{gather}
    \min_{\Gamma(\gZ)} [\sum_{i \in \mT_S} \frac{g_i}{g} \mathcal{R}_h(\mG_i, \gZ_i | P_g) + \sum_{i \in \mT_S} \frac{n_i}{n} \mathcal{R}_h(\mS_i, \gZ_i | P_0)] \\
    = \min_{\Gamma(\gZ)} [\sum_{i \in \mT_S} \frac{g_i}{g} \frac{1}{g_i}  \sum_{\vg \in \mG_i} \mathcal{R}_h(\vg, \gZ_i | P_g) + \sum_{i \in \mT_S} \frac{n_i}{n} \frac{1}{n_i}  \sum_{\vs \in \mS_i} \mathcal{R}_h(\vs, \gZ_i | P_0)] \\
    =
    \min_{\Gamma(\gZ)} [ \frac{1}{g} \sum_{i \in \mT_S} \sum_{\vg \in \mG_i} \E_{\vz \sim  P_g} [\|\vh(\vz) - \vh(\vg) \| : \vz \in \gZ_i] + \frac{1}{n} \sum_{i \in \mT_S} \sum_{\vs \in \mS_i}  \E_{\vz \sim  P_0} [\|\vh(\vz) - \vh(\vs) \| : \vz \in \gZ_i] ]
\end{gather}
Next we deal with each term in above expression separately. We approximate the expectation term by its empirical version as follows:
\begin{gather}
    \approx \min_{\Gamma(\gZ)} \frac{1}{g} \sum_{i \in \mT_S} \sum_{\vg \in \mG_i} \sum_{\vz \in \mG_i}\frac{1}{g_i} \|\vh(\vz) - \vh(\vg) \| 
\end{gather}

For easier derivation, we will minimize the sum of squares of distance. Note that the square of the sum of positive number is always smaller than or equal to sum of their respective square multiply with a constant. So basically we are minimizing an upper bound of the problem. Denote $\mu_i$ as the average of all the output of generated samples in the region $i$ (average of all $\vh(z)$ with $z \in \mG_i$), and our optimization problem becomes:
\begin{gather}
    \min_{\Gamma(\gZ)} \frac{1}{g} \sum_{i \in \mT_S} \sum_{\vg \in \mG_i} \sum_{\vz \in \mG_i} \frac{1}{g_i}\|\vh(\vz) - \vh(\vg) \|^2
    \label{app:within-cluster}
    \\= \min_{\Gamma(\gZ)} \frac{2}{g} \sum_{i \in \mT_S} \sum_{\vz \in \mG_i} \|\vh(\vz) - \mu_i\|^2
    \label{app:K-means-variation}
\end{gather}

Note that the quantity inside the sum of (\ref{app:within-cluster}) is called the within-cluster variation of the K-means problem and is equivalent to the traditional variation in (\ref{app:K-means-variation}). The proof is provided in Lemma \ref{lemma-cluster}. Our bound now becomes the lower bound of the K-means optimization problem, and if we ignore the term of summing only over the region containing real samples, it becomes a K-means problem and can be solved easily. The same argument can be applied for the second term.
\section{Hyperparameters settings}
\label{app:hyperparam}
\noindent
We adopt the same data preprocessing pipeline as in the baseline DataDream~\cite{kim2024datadream}, applying standard augmentations including random horizontal flipping, random resized cropping, random color jitter, random grayscale, Gaussian blur, and solarization. Our main difference is to use CutMix~\cite{cutmix} and Mixup~\cite{zhang2018mixup} by default on all datasets to reduce the amount of hyperparameter tuning. We train our models using AdamW~\cite{adamw}, searching the learning rate in $\{2e{-4}, 1e{-4}, 1e{-5}, 1e{-6}\}
$ and the weight decay in $\{1e{-3}, 5e{-4}, 1e{-4}\}$ for the full approach. For the lightweight approach, we adopt the learning rate and weight decay settings from DISEF\cite{dacosta2023diversified}. We run the K-means clustering step for 300 iterations using the FAISS library~\cite{douze2024faiss} in the full approach. For the classifier tuning phase, we train for 50~epochs for the full approach and 150~epochs for the lightweight approach. The $\lambda_1,\lambda_2$ values are fixed to be (0.1,1) for the lightweight version and search inside $\{(0.1,1), (2,20), (20,200), (50,500)\}$ for the full version. In general, we find that our method is quite robust with this choice of values, no substantial performance difference observed when changing these values inside the defined set.

\section{Lightweight version}
\label{lightweight_app}
    \subsection{Full version pseudocode}

    


\begin{algorithm}
    \caption{Lightweight version}\label{lightweight_algo}
\textbf{Input}: Real dataset $\mS$, number  $g$ of synthesis samples, (conditional) Pretrained generator models $\gG$, Learning rate schedule $\eta$\\
\textbf{Output}: Set of $g$ generated data samples $\mG$    
\begin{algorithmic}[1]
    \STATE Initialize centroids $\vz$ for every local area
    \STATE Initialize noise vectors ($\vu_1, \vu_2, \dots, \vu_g$) randomly
    \COMMENT{In each epoch}
        \FOR{mini-batch $A$}
        \STATE Sample real data set $\mS_A$, and take their labels as conditional inputs for generator
        \IF{iteration = 1}
            \STATE Generate set $\mG_A = \gG(\vu_A)$ based on labels condition of $\mS_A$
        \ELSE 
            \STATE Take set $\mG_A$ from stored generated set.
        \ENDIF
            \STATE Assign data points $\mG_A$ to their nearest clusters, indexed by $i$ and centered at $\vz_i$
            \STATE Update learning rate $\eta \gets \frac{1}{|\vz_i|}$
            \STATE Update the center:
                $\vz_i \gets (1- \eta)\vz_i + \eta \mG_k$ 
            \STATE Update $\mT_S$ and the counts $g_i$
            \STATE Use own generated data to compute second term of loss function
            \STATE Compute loss function $\mathcal{L}$ 
            \STATE Train model $\vh$ with loss function $\mathcal{L}$ and set of real data $\mS_A$ and $\mG_A$
            \STATE Update model $\vh$ by using gradient.
        \ENDFOR 
\end{algorithmic}
\end{algorithm}

 The loss function differs slightly from the full version, where we compute regularization terms in all data instead of each batch. The reason for this is because the number of generated data is much smaller, so this computation ensures that the regularization is big enough to be meaningful.
  Initially, the algorithm optimizes with respect to data partitions (addressing the partition optimization problem) and subsequently refines classifiers . This is achieved using an alternate optimization strategy in each epoch to minimize the loss function $\mathcal{L}$ which was constructed as a combination of distribution matching, robustness terms, and classification loss. Note that in this version, the LoRA modules are also attached to the generator, without tuning them.

\textbf{Algorithm Overview}:
\begin{enumerate}
    \item \textbf{Initialization}:Begin with initializing the partitions and noise vectors essential for the iterations. Each iteration is one epoch of training classifier.
    \item \textbf{Iterative Optimization}:
    \begin{itemize}
        \item Partition Optimization: Utilize the MiniBatch K-means algorithm to optimize the partition (lines 11-13). Following partition updates, recalibrate dependent quantities (line 14) and monitor changes in regions to optimize memory usage when calculating $\mathcal{L}$.
        \item Model Optimization: Forward real and synthetic data through models and update by loss function
        \item Loss Computation and Gradient Descent: Calculate the loss function $\mathcal{L}$ based on computed components: output of real and synthetic data and their discrepancy. Finally, update model $\vh$ via gradient descent by backpropagating.
    \end{itemize}
\end{enumerate}
\section{Partitioning Experiments}
\label{app:par-exp}

In this section, we show the detailed figure of the analysis on number of clusters (Figure \ref{fig:abla-partition}). The experiments were conducted on 4 datasets: EuroSAT, Oxford-IIIT Pets, DTD, and FGVC Aircraft, with the number clusters increased from 1 (all data belong to the same partition) to 1,2 and 4 times the number of classes in each dataset. The results validate our claim in the main text that the optimal choice of number of clusters typically about twice as the number of classes.
\begin{figure}
    \centering
    \includegraphics[width=0.7\linewidth]{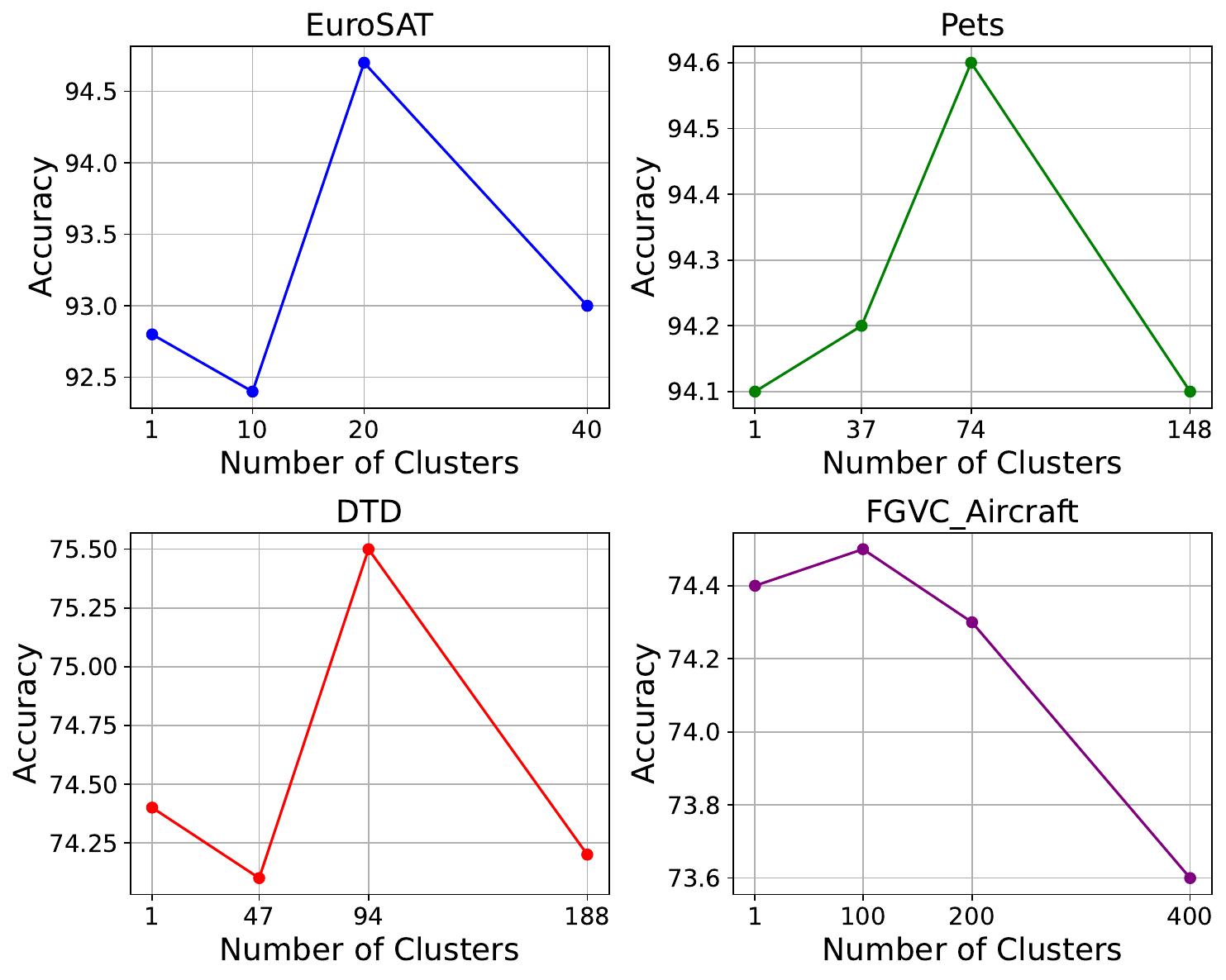}
    \caption{Results with increasing number of clusters on 4 datasets}
    \label{fig:abla-partition}
    \vspace{-0.2cm}
\end{figure}

\section{More extreme few-shot scenarios}

We conduct experiments on 3 datasets that were also used for DataDream (DD) \cite{kim2024datadream}. The results are shown in Table \ref{tab:extreme-few-shot}.

\begin{table}[h]
\centering
\begin{tabular}{c|cc|cc|cc}
\hline
\textbf{No. of real shots} & \multicolumn{2}{c|}{\textbf{AirC}} & \multicolumn{2}{c|}{\textbf{Cars}} & \multicolumn{2}{c}{\textbf{FLO}} \\
\cline{2-7}
& \textbf{DD} & \textbf{Ours} & \textbf{DD} & \textbf{Ours} & \textbf{DD} & \textbf{Ours} \\
\hline
1 & \textbf{31.1} & 25.3 & \textbf{72.9} & 72.1 & \textbf{88.7} & 86.1 \\
4 & 38.3 & \textbf{51.6} & 82.6 & \textbf{86.9} & 96.0 & \textbf{96.9} \\
8 & 54.6 & \textbf{63.9} & 87.4 & \textbf{91.3} & 98.4 & \textbf{98.7} \\
\hline
\end{tabular}
\caption{Results for more extreme few-shot conditions}
\label{tab:extreme-few-shot}
\end{table}
Overall, our method underperforms the baseline in the extreme 1-shot scenario. With only one real sample, the regularization terms in our loss function become small, reducing model robustness and possibly causing performance drops. However, our method significantly outperforms the baseline in 4-shot and 8-shot settings. Thus, extremely limited real data case remains a limitation of our approach.

\section{Adaptation to synthetic data only}
In this section, we investigate a possible adaptation of our method to the case of synthetic data only. In order to do it, one can remove the discrepancy term and loss on real data from the loss function and compute the robustness loss on all regions that contains at least 2 synthetic samples. Overall, the loss function can be rewritten as follows: $F(\mathbf{G},\boldsymbol{h})+\lambda_2\frac{1}{g}\sum_{i}\sum_{\boldsymbol{g_1},\boldsymbol{g_2}\in \mathbf{G_i}}\frac{1}{g_i}\|\boldsymbol{h}((\boldsymbol{g_1})-\boldsymbol{h}(\boldsymbol{g_2})\|$. We conduct experiments to test the effectiveness of this loss function in some small and medium-sized datasets. The results are shown in Table \ref{tab:syn-only}.
\begin{table}[h]
\centering
\begin{tabular}{l|c|c}
\hline
\textbf{Dataset} & \textbf{DD} & \textbf{Ours} \\
\hline
EuSAT & 80.3 & \textbf{80.6} \\
Pets  & \textbf{94.0} & \textbf{94.0} \\
AirC  & \textbf{71.2} & 70.6 \\
CAL   & 96.2 & \textbf{96.8} \\
Food  & 86.7 & \textbf{89.2} \\
\hline
\end{tabular}
\caption{Results if only synthetic data were used.}
\label{tab:syn-only}
\end{table}

Our method outperforms the baseline on 3 out of 5 datasets, comparable in 1 and worse in 1 dataset. On average, our methods still perform better than the baseline, showcasing the necessity of the robustness regularization. However, these increases are marginal, and much less significant compared to our full method, which takes into account both discrepancy and robustness terms.
\section{Varying the number of synthetic data}
To further investigate the effect of the number of synthetic samples, we conduct more experiments varying the number of them in Table \ref{tab:varying-syn}. The results confirmed that our method consistently outperform baselines when varying synthetic data sizes.

\begin{table}[h]
\centering
\begin{tabular}{c|cc|cc|cc}
\hline
\textbf{No. synth. samples} & \multicolumn{2}{c|}{\textbf{EuSAT}} & \multicolumn{2}{c|}{\textbf{DTD}} & \multicolumn{2}{c}{\textbf{AirC}} \\
\cline{2-7}
& \textbf{DD} & \textbf{Ours} & \textbf{DD} & \textbf{Ours} & \textbf{DD} & \textbf{Ours} \\
\hline
100 & 93.4 & \textbf{94.2} & 73.4 & \textbf{73.9} & 68.5 & \textbf{69.6} \\
200 & 93.5 & \textbf{94.5} & 73.1 & \textbf{74.0} & 69.3 & \textbf{71.9} \\
300 & 93.7 & \textbf{94.4} & 73.5 & \textbf{73.8} & 70.9 & \textbf{73.0} \\
400 & 93.8 & \textbf{94.4} & 74.1 & \textbf{74.2} & 70.8 & \textbf{73.3} \\
500 & 93.5 & \textbf{94.7} & 74.1 & \textbf{74.5} & 72.3 & \textbf{74.3} \\
\hline
\end{tabular}
\caption{The impact of the number of synthetic samples per class. Results of only the 16-shot real data}
\label{tab:varying-syn}
\end{table}

\end{document}